\def\dist{\rm dist}
\crefname{hypothesis}{Hypothesis}{Hypotheses}
\title{A  Level Set Representation Method 
for N-dimensional Convex Shape and Applications}
\author{Lingfeng Li\thanks{Department of Mathematics, Hong Kong Baptist University, Hong Kong, China; Department of Mathematics, Southern University of Science and Technology, Shenzhen, China}
\and Shousheng Luo\thanks{
School of Mathematics and Statistics, Data Analysis Technology Lab, Henan University, Kaifeng, China}
\and Xue-Cheng Tai\thanks{Department of Mathematics, Hong Kong Baptist University, Hong Kong, China (\email{xuechengtai@hkbu.edu.hk})} 
\and Jiang Yang\thanks{Department of Mathematics, Southern University of Science and Technology, Shenzhen, China}
}
\newcommand*{\addFileDependency}[1]{
  \typeout{(#1)}
  \@addtofilelist{#1}
  \IfFileExists{#1}{}{\typeout{No file #1.}}
}
\newcommand{\R}{\mathbb{R}}
\newcommand{\dom}{\textbf{dom}}
\begin{document}
\maketitle

\begin{abstract}
In this work, we present a new efficient method for convex shape representation, which is regardless of the dimension of the concerned objects, using level-set approaches. Convexity prior is very useful for object completion in computer vision. It is a very challenging task to design an efficient method for high dimensional convex objects representation. In this paper, we prove that the convexity of the considered object is equivalent to the convexity of the associated signed distance function. Then, the second order condition of convex functions is used to characterize the shape convexity equivalently. We apply this new method to two applications: object segmentation with convexity prior and convex hull problem (especially with outliers). For both applications, the involved problems can be written as a general optimization problem with three constraints. Efficient algorithm based on alternating direction method of multipliers is presented for the optimization problem. Numerical experiments are conducted to verify the effectiveness and efficiency of the proposed representation method and algorithm.


\end{abstract}

\begin{keywords}
  Convex shape prior, Level-set method, Image segmentation, Convex hull, ADMM. 
\end{keywords}

\begin{AMS}
62M40,65D18,65K10
\end{AMS}

\section{Introduction}
In the tasks of computer vision, especially image segmentation, shape priors are very useful information to improve output results when the objects of interest are partially occluded or suffered from strong noises, intensity bias and artifacts. Therefore, various shape priors are investigated in the literature \cite{chan2001active,gorelick2014convexity,leventon2002statistical,Yuan2013}. In \cite{chen2002using} and \cite{leventon2002statistical}, the authors combined shape priors with the snakes model \cite{caselles1997geodesic} using a statistical approach and a variational approach, respectively. Later, based on the Chan-Vese model \cite{chan2001active}, a new variational model, which uses a labelling function to deal with the shape prior, was proposed in \cite{cremers2003towards}. A modification of this method was presented in \cite{chan2005level} to handle the scaling and rotation of the prior shape. All the priors used in these papers are usually learned or obtained from some given image sets specifically.

Recently, generic and abstract shape priors have attracted more and more attentions, such as connectivity \cite{vicente2008graph}, star shape \cite{veksler2008star,yuan2012efficient}, hedgehog \cite{isack2016hedgehog} and convexity \cite{Gorelick2017Convexity,Yuan2013}. Among them, the convexity prior is one of the most important priors. Firstly, many objects in natural and biomedical images are convex, such as balls, buildings and some organs \cite{royer2016convexity}. Secondly, convexity also plays a very important role in many computer vision tasks, like human vision completion \cite{Liu1999The}. Several methods for convexity prior representation and its applications were discussed in the literature \cite{gorelick2014convexity,Yuan2013,yangA2017}. However, these methods often work for 2-dimensional convex objects only and may have relatively high computational costs. In this paper, we will present a new method for convexity shape representations. This method is not only suitable for convex objects in all dimensions but also numerically efficient for computations.

Most of the existing methods for convex shape representation can be divided into two groups: discrete approaches and continuous approaches. For the first class, there are several methods in the literature. In \cite{strekalovskiy2011generalized}, the authors first introduced a generalized ordering constraint for convex object representation. To achieve the convexity of objects, one needs to explicitly model the boundaries of objects. Later, an image segmentation model with the convexity prior was presented in \cite{gorelick2014convexity}. This method is based on the convexity definition and the key idea is penalizing all 1-0-1 configurations on all straight lines where 1 (resp. 0) represents the associated pixel inside (resp. outside) the considered object. This method was then generalized for multiple convex objects segmentation in \cite{gorelick2017multi}. In \cite{royer2016convexity}, the authors proposed a segmentation model which can handle multiple convex objects. They formulated the problem as a minimum cost multi-cut problem with a novel convexity constraint: If a path is inside the concerned object(s), then the line segment between the two ends should not pass through the object boundary.

The continuous methods usually characterize the shape convexity by exploiting the non-negativity of the boundary curvature in 2-dimensional space. As far as we know, the curvature non-negativity was firstly incorporated into image segmentation in \cite{Yuan2013}. Then, a similar method was adopted for cardiac left ventricle segmentation in \cite{yangA2017}. In \cite{bae2017augmented}, an Euler's elastica energy-based model was studied for convex contours by penalizing the integral of absolute curvatures. 
Recently, the continuous methods or curvature-based methods were developed further in \cite{luo2018convex,yan2018convexity}. For a given object in 2-dimensional space,
it was proved in \cite{yan2018convexity} that the non-negative Laplacian of the associated signed distance function \cite{sussman1994level} (SDF) is sufficient to guarantee the convexity of shapes.In \cite{luo2018convex}, the authors also proved that this condition is also necessary. 
In addition, instead of solving negative curvature penalizing problems like \cite{bae2017augmented,Yuan2013,yangA2017}, these two papers incorporated non-negative Laplacian condition as a constraint into the involved optimization problem. This method has also been extended for multiple convex objects segmentation in \cite{luo2019convex}. Projection algorithm and alternating direction method of multipliers (ADMM) were presented in \cite{yan2018convexity} and \cite{luo2018convex}.

In some real applications, one needs to preserve the convexity for high dimensional objects, such as tumors and organs in 3D medical images. Therefore, it is very significant to study efficient representation methods for high dimensional convex objects. However, there are some difficulties in theories and numerical computation for the existing methods mentioned above to be generalized to higher dimensions.

For the discrete methods in \cite{Gorelick2017Convexity,royer2016convexity}, it is almost impossible to extend these methods to higher dimensional ($\geq 3$) cases directly because of the computational complexity. One can setup the same model as the two dimensional case, but the computational cost will increase dramatically. For the continuous methods using level-set functions \cite{luo2018convex,Yuan2013}, the mean curvature (Laplacian of the SDF) of the zero level-set surface can not guarantee the convexity of the object convexity in high dimensions. For example, in the 3-dimensional space, an object is convex if and only if its two principle curvatures always have the same sign at the boundary. Accordingly, one should use the non-negativity of Gaussian curvatures to characterize the convexity of objects \cite{elsey2009analogue}. However, the problems (e.g., optimization problem for image segmentation) with non-negative Gaussian curvature constraint or penalty are very complex and difficult to solve.

In this work, we present a new convexity prior which works in any dimension. Similar to the framework in \cite{luo2018convex}, we also adopt the level-set representations, which is a powerful tool for numerical analysis of surfaces and shapes \cite{osher2002level,peng1999pde,zhao1996variational} and have many applications in image processing \cite{chan1999active,vese2002multiphase}. We first prove the equivalence between the object convexity and the convexity of the associated SDF. It is well-known that a convex function must satisfy the second order conditions, i.e., having positive semi-definite Hessian matrices at all points if it is secondly differentiable. Based on this observation, we obtain a new way to characterize the shape convexity using the associated SDF. The proposed methods have several advantages. Firstly, it works regardless of the object dimensions. Secondly, this method can be easily extended to multiple convex objects representation using the idea in \cite{luo2019convex}. Thirdly, this representation is very simple and allows us to design efficient algorithms for potential applications.

To verify the effectiveness of the proposed method, we apply it to two types of important applications in computer vision and design an efficient algorithm to solve them. The first one is the image segmentation task with convexity prior  \cite{Gorelick2017Convexity,luo2018convex,Yuan2013}. More specifically, we combine the convexity representation with a 2-phase probability-based segmentation model. 
The second model is the variational convex hull problem,
which was first introduced in \cite{li2019convex} for 2-dimensional binary images. 
This model can compute multiple convex hulls of separate objects simultaneously and is very robust to noises and outliers compared to traditional methods. However, since it uses the same convexity prior with \cite{luo2018convex}, it works only for 2-dimensional problems. Using the proposed method, we can generalize it to higher dimensions and maintain all of its advantages, e.g., robustness to outliers.

Both applications can be formulated as a general constrained optimization problem. 
Similar to \cite{li2019convex,luo2018convex}, alternating direction method of multiplier (ADMM) for the general optimization problem is derived to solve the models. In the proposed algorithm, the solutions of all sub-problems can be derived explicitly or computed efficiently. 
Numerical results for the two concerned problems are presented to show the effectiveness and efficiency of our methods in 2 and 3 dimensional cases. 

The rest of this paper is organized as follow. In Section \ref{sec:pre}, we will introduce some basic notations and results from convex optimization. Then we will present the convexity prior representation method in Section \ref{sec:prior}. Section \ref{sec:model} is devoted to two variational models for the two applications. The numerical algorithm will then be given in Section \ref{sec:algo}. In Section \ref{sec:exp}, we will present some experimental results in 2 and 3 dimensional spaces to test our proposed methods. Conclusions and future works will be discussed in Section \ref{sec:cln}. 

\section{Preliminaries} \label{sec:pre}
First, we will briefly introduce some results from convex optimization in this section. Given a set of points $\{x_i\}_{i=1}^k$ in $\mathbb{R}^d$, a point in the form $\sum_{i=1}^k\theta_i x_i$ is called a \textit{convex combination} of $\{x_i\}$ if $\theta_i\geq 0$ and $\sum_{i=1}^k\theta_i=1$. Then, given a set $C$ in $\mathbb{R}^d$, it is said to be \textit{convex} if all convex combinations of the points in $C$ also belong to $C$. The convex hull or convex envelope of $C$ is defined as the collection of all its convex combinations, i.e., 
\begin{equation}
    \textbf{Conv}(C)=\{\sum_{i=1}^k\theta_ix_i|x_i\in C,\ \theta_i\geq 0 \text{ for }i=1,\dots,k \text{ and } \sum_{i=1}^k\theta_i=1\},
\end{equation}
where $k$ can be any finite positive integer. In other words, $\textbf{Conv}(C)$ is the smallest convex set containg $C$.

Another important concept is the convex function. Given a function $f(x):\R^d\rightarrow\R\bigcup\{+\infty\}$, $f(x)$ is a \textit{convex function} if its epigraph $\{(x,f(x))|x\in\mathbb{R}^d\}$ is a conves set. If $f$ is twice differentiable, a necessary and sufficient condition for the convexity is that the Hessian matrix of $f$ is positive semi-definite at every $x$, i.e. 
\begin{equation} 
  Hf(x)\geq 0,\label{eq:second_order}
\end{equation} 
where $Hf(x)$ denotes the Hessian matrix of $f$ at $x$. 

For any measurable $f$ and real number $\alpha$, we can define the $\alpha$ level-set, $\alpha$ sublevel-set and $\alpha$ suplevel-set of $f$ as follow:
\begin{align}
    &L_{\alpha}(f)=\{x|x\in\dom(f),\ f(x)=\alpha\},\\ 
    &L_{\alpha}^-(f)=\{x|x\in\dom(f),\ f(x)<\alpha\},\\
    &L_{\alpha}^+(f)=\{x|x\in\dom(f),\ f(x)>\alpha\},
\end{align}
where $\dom(f)$ denotes the domain of $f$. It is easy to verify that if $f(x)$ is convex, $L^-_{\alpha}(f)$ is 
also convex for any $\alpha\in\R$.

Next, we are going to introduce a powerful tool, named level-set function, for the implicit representation of shapes. Given an open set $\Omega_1$ in $\R^d$ with piecewise smooth boundary $\Gamma$, the level-set function, denoted as $\phi(x):\mathbb{R}^d\rightarrow\mathbb{R}$, of $\Gamma$ satisfies:
\begin{equation}
    \begin{cases} \phi(x)>0, & x\in \Omega_0,\\
    \phi(x)=0, & x\in\Gamma,\\
    \phi(x)<0, &x\in \Omega_1,
    \end{cases}
\end{equation}
where $\Omega_0$ is the exterior of $\Gamma$. We further assume that $\Omega_1$ is nonempty and $\Gamma$ has measure zero in the rest of this paper. Equivalently, we can also denote $\Omega_1$ as $L_0^-(\phi)$, $\Gamma$ as $L_0(\phi)$ and $\Omega_0$ as $L_0^+(\phi)$. Then, the evolution of the hypersurface $\Gamma$ can be represented by the evolution of $\phi$ implicitly. The main advantage of the level-set method is that it can track complicated topological changes and represent sharp corners very easily. Using the heaviside function $h(\cdot)$, we can obtain the characteristic function of $\Omega_0$ by $h(\phi(x))$ and the characteristic function of $\overline{\Omega}_1$ by $1-h(\phi(x))$. The distributional derivative of the heaviside function is denoted as $\delta$.

In this work, we are going to use the signed distance function, which is a special type of level-set function, to represent convex shapes. For an open set $\Omega_1$ with piecewise smooth boundary $\Gamma$, the SDF of $\Gamma$ is defined as
\begin{equation}
    \phi(x)=\begin{cases} \dist(x,\Gamma), & x\in\Omega_0, \\
    0, & x\in\Gamma,\\
    -\dist(x,\Gamma), & x\in\Omega_1,
    \end{cases}
\end{equation}
where $\dist(x,\Gamma)=\underset{z\in\Gamma}{\text{inf}}\Vert z-x\Vert_2$. Since $\Gamma$ is a compact set, the infimum can be obtained in $\Gamma$. It is well-known that the SDF is continuous and satisfies the Eikonal equation \cite{osher2002level}:
\begin{equation}
    |\nabla\phi|=1, \label{eq:eikonal}
\end{equation}
where the gradient is defined in the weak sense. Notice that the weak solution of (\ref{eq:eikonal}) is not unique, but one can define a unique solution in the viscosity sense \cite{crandall1992user} given certain boundary conditions. We also obtained an interesting property for the SDF:
\begin{lemma} \label{lemma:smallest_SDF}
Suppose we are given two compact subset $C_1$ and $C_2$ in $\R^d$. Denotes their boundaries by $\Gamma_1$ and $\Gamma_2$, and their SDFs by $\phi_1$ and $\phi_2$ respectively. Then $C_1\subseteq C_2$ if and only if $\phi_1(x)\geq\phi_2(x)$ for any $x$.
\end{lemma}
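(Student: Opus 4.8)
The plan is to prove the two implications separately, each reducing to two elementary facts about the signed distance function. The first is the \emph{sign characterization}: writing $C_i=\overline{\Omega_1^{(i)}}$, the definition of the SDF gives $\phi_i(x)\le 0$ precisely when $x\in C_i$ (with equality on $\Gamma_i$ and strict inequality on the interior) and $\phi_i(x)>0$ precisely when $x$ lies in the exterior. The second is the \emph{monotonicity} of the point-to-set distance: if $A\subseteq B$ then $\dist(x,A)\ge\dist(x,B)$ for every $x$, which is immediate because the infimum defining the distance is then taken over a larger set.

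For the reverse implication $\phi_1\ge\phi_2\Rightarrow C_1\subseteq C_2$, which is the shorter one, I would take an arbitrary $x\in C_1$. The sign characterization gives $\phi_1(x)\le 0$, hence $\phi_2(x)\le\phi_1(x)\le 0$, and the sign characterization applied to $\phi_2$ yields $x\in C_2$. Since $x$ was arbitrary, $C_1\subseteq C_2$.

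For the forward implication $C_1\subseteq C_2\Rightarrow\phi_1\ge\phi_2$, I would first record the global representation
\begin{equation}
  \phi_i(x)=\dist(x,C_i)-\dist\bigl(x,\ \R^d\setminus\text{int}(C_i)\bigr),
\end{equation}
valid at every point, including the boundary where both terms vanish. From $C_1\subseteq C_2$ I obtain $\dist(x,C_1)\ge\dist(x,C_2)$ by monotonicity; moreover $C_1\subseteq C_2$ forces $\text{int}(C_1)\subseteq\text{int}(C_2)$, so the complements satisfy $\R^d\setminus\text{int}(C_1)\supseteq\R^d\setminus\text{int}(C_2)$ and hence $\dist(x,\R^d\setminus\text{int}(C_1))\le\dist(x,\R^d\setminus\text{int}(C_2))$. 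Subtracting these two inequalities in the representation above yields $\phi_1(x)\ge\phi_2(x)$ for all $x$.

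The main obstacle is not the logical structure but the care needed with open versus closed sets: I must verify that the representation holds at boundary points and that, for a point in the exterior, the nearest point of the compact set $C_i$ lies on $\Gamma_i$, so that the boundary-based distance used to define $\phi_i$ genuinely coincides with $\dist(x,C_i)$ outside and with $\dist(x,\R^d\setminus\text{int}(C_i))$ inside. Once this identification is pinned down, both directions follow directly from the sign characterization and monotonicity.
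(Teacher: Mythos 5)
Your proof is correct, and the forward direction takes a genuinely different route from the paper's. The paper proves $C_1\subseteq C_2\Rightarrow\phi_1\ge\phi_2$ by a three-way case split — $x$ outside $C_2$, $x\in C_2\setminus C_1$, and $x\in C_1$ — handling the first case with the monotonicity of $\inf_{z\in C}\Vert z-x\Vert_2$ (via Lemma \ref{lamma:project_to_surface}), the middle case by signs alone, and the last case with the closed-ball characterization of Lemma \ref{lemma:deviate_from_surface} ($\phi(x)\le -c$ iff $\overline{B(x,c)}\subseteq\overline{\Omega}_1$), so that $\overline{B(x,|\phi_1(x)|)}\subseteq C_1\subseteq C_2$ forces $\phi_2(x)\le\phi_1(x)$. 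You instead collapse all three cases into one line by writing $\phi_i(x)=\dist(x,C_i)-\dist(x,\R^d\setminus\mathrm{int}(C_i))$ globally and applying monotonicity of the point-to-set distance to each term separately. Your version is shorter and avoids the ball lemma entirely, but it shifts the work onto justifying the two-term representation: you must check that the nearest point of $C_i$ from an exterior point lies on $\Gamma_i$ (this is exactly the content of the paper's Lemma \ref{lamma:project_to_surface}, which the paper notes extends to the non-convex case) and, symmetrically, that for interior points $\dist(x,\R^d\setminus\mathrm{int}(C_i))=\dist(x,\Gamma_i)$, which needs $\partial(\mathrm{int}(C_i))\subseteq\Gamma_i$ — true here since $C_i$ is closed, and unproblematic under the paper's piecewise-smooth boundary assumption, but worth stating since $\mathrm{int}(\overline{\Omega}_1)$ need not equal $\Omega_1$ for arbitrary open sets. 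The paper's case analysis has the advantage of reusing only lemmas it has already established and needs elsewhere (both are also used in the proof of Theorem \ref{thm:equivalence}). Your reverse direction is identical to the paper's; incidentally, your sign bookkeeping there is the correct one — the paper's displayed inequality for the case $x\in C_2\setminus C_1$ has the roles of $\phi_1$ and $\phi_2$ transposed, evidently a typo.
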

The proof of this result will be given in the next section. This result is very useful in computing the convex hull via level-set representation.

In \cite{luo2018convex}, the authors presented an equivalent condition of 2-dimensional object convexity: 
\begin{equation}
    \Delta\phi\geq 0\label{eq:prior_1},
\end{equation}
where $\phi$ is the associated SDF. It can be shown that $\Delta\phi(x)$ equals to the mean curvature of the level-set curve $L_{c}(\phi)$ 
at the point $x$, where $c=\phi(x)$. In the 2 dimensional space, if the mean curvatures of the zero level-set curves are non-negative, the object $\Omega_1$ must be convex. However, this is not the case in higher dimensions.

\section{Convexity representation for high dimensional shapes} \label{sec:prior}
In \cite{luo2018convex}, the authors proved that a 2-dimensional shape is convex if and only if any sublevel-set of its SDF is convex. Actually, one can show that the convexity of a shape is also equivalent to the convexity of its SDF and this is true in any dimensions. 
We summarize this result in the following theorem. 

\begin{theorem}\label{thm:equivalence}
Let $\Gamma$ be the boundary of a bounded open convex subset $\Omega_1\subset\R^d$ and $\phi$ be the corresponding SDF of $\Gamma$. Then, $\Omega_1$ is convex if and only if $\phi(x)$ is convex.
\end{theorem}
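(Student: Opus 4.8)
The plan is to prove the two implications separately, treating $\phi$ convex $\Rightarrow \Omega_1$ convex as the easy direction and $\Omega_1$ convex $\Rightarrow \phi$ convex as the substantial one. For the easy direction I would simply recall from the preliminaries that $\Omega_1=L_0^-(\phi)$ and that every strict sublevel-set of a convex function is convex; hence if $\phi$ is convex then $\Omega_1$ is convex immediately, with no further work.

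For the hard direction my main tool would be the support-function representation of the signed distance. Let $\sigma_{\Omega_1}(a)=\sup_{y\in\Omega_1}\langle a,y\rangle$ denote the support function of the convex body $\Omega_1$, and consider
\begin{equation}
\psi(x)=\sup_{\|a\|_2=1}\bigl(\langle a,x\rangle-\sigma_{\Omega_1}(a)\bigr).
\end{equation}
Being a pointwise supremum of affine functions of $x$, $\psi$ is automatically convex, so the whole proof reduces to verifying that $\psi$ coincides with the SDF $\phi$. I would check this on the three regions separately: (i) for $x\in\Gamma$, every supporting hyperplane gives $\langle a,x\rangle\le\sigma_{\Omega_1}(a)$, with equality for an outward normal at $x$, so $\psi(x)=0=\phi(x)$; (ii) for $x$ in the interior $\Omega_1$, each term $\langle a,x\rangle-\sigma_{\Omega_1}(a)$ is negative and equals minus the distance from $x$ to the supporting hyperplane with outward normal $a$, so the supremum recovers $-\inf_{\|a\|=1}(\sigma_{\Omega_1}(a)-\langle a,x\rangle)=-\dist(x,\Gamma)=\phi(x)$; (iii) for $x\in\Omega_0$ I would invoke the classical identity $\dist(x,\overline{\Omega_1})=\sup_{\|a\|=1}(\langle a,x\rangle-\sigma_{\Omega_1}(a))$ for the distance to a closed convex set, which equals $\dist(x,\Gamma)=\phi(x)$ outside. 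Combining the three cases gives $\phi=\psi$, and convexity follows.

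The facts I would rely on are standard convex-analysis statements: that the distance from an interior point to the boundary of a convex body is the infimum over supporting hyperplanes of the point-to-hyperplane distances, and that the distance to a closed convex set is the Fenchel-type supremum displayed above (equivalently, the conjugate of the indicator of $\Omega_1$ is $\sigma_{\Omega_1}$). The step I expect to be the main obstacle is the bookkeeping that makes the single expression $\psi$ reproduce the signed distance on both sides of $\Gamma$ at once: one must restrict the supremum to the unit sphere $\|a\|=1$ rather than the unit ball, since allowing $a=0$ would flatten $\psi$ to zero on $\Omega_1$ and destroy the interior (negative) part, and one must confirm that the relevant infima and suprema are attained, which uses boundedness of $\Omega_1$ and compactness of $\Gamma$. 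An alternative, more hands-on route would be to prove directly that the exterior distance is convex and the interior distance-to-boundary is concave, and then glue the two pieces along $\Gamma$; there the delicate point is ruling out a concave kink across $\Gamma$, a difficulty that the support-function formulation avoids entirely.
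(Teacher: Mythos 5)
Your proposal is correct, but the hard direction takes a genuinely different route from the paper. The paper proves convexity of $\phi$ directly from the definition: given $x_1,x_2$ and $x_0=\theta x_1+(1-\theta)x_2$, it splits into three cases (both points in $\overline{\Omega}_0$, both in $\overline{\Omega}_1$, one in each), and handles them with Lemma \ref{lamma:project_to_surface} (the nearest point of $\overline{\Omega}_1$ to an exterior point lies on $\Gamma$ and is unique, via the Hilbert projection theorem) and Lemma \ref{lemma:deviate_from_surface} ($\phi(x)\leq -c$ iff $\overline{B(x,c)}\subseteq\overline{\Omega}_1$); the mixed case is the delicate one and is reduced to the other two by locating the point where the segment crosses $\Gamma$. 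Your argument instead exhibits $\phi$ as the pointwise supremum of the affine functions $x\mapsto\langle a,x\rangle-\sigma_{\Omega_1}(a)$ over the unit sphere, so convexity is immediate and all the work moves into the identification $\psi=\phi$ on the three regions. That identification is sound: on $\Omega_0$ it is the standard conjugate formula for the distance to a closed convex set (with the supremum attained at the unit normal $(x-Px)/\Vert x-Px\Vert_2$); on $\Gamma$ it follows from the supporting hyperplane theorem; and on $\Omega_1$ the identity $\dist(x,\Gamma)=\inf_{\Vert a\Vert=1}\dist(x,H_a)$ holds because any supporting hyperplane through the nearest boundary point gives one inequality, while for any $a$ the perpendicular from $x$ to $H_a$ must cross $\Gamma$, giving the other. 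You also correctly flag the one trap, namely that the supremum must be over $\Vert a\Vert_2=1$ and not the unit ball. What each approach buys: the paper's proof is elementary and self-contained, using nothing beyond the projection theorem, whereas yours is shorter and more structural, and as a by-product yields an explicit dual representation of the SDF of a convex body that makes its convexity (and $1$-Lipschitz continuity) transparent; the cost is reliance on classical convex-analysis facts whose proofs, if written out, would roughly match the paper's in length. If you write this up, the interior identity in case (ii) is the step that deserves a full argument rather than a citation.
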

It is well-known that the SDF $\phi$ must satisfy the second order condition (\ref{eq:second_order}) if it is secondly differentiable. Therefore, we can use the condition (\ref{eq:second_order}) to represent the convexity of shapes. Before proving Theorem \ref{thm:equivalence}, we need to introduce some useful lemmas.



\begin{lemma} \label{lamma:project_to_surface}
Let $\Gamma$ be the boundary of a bounded open convex subset $\Omega_1\subset\R^d$ and $\phi$ be the corresponding SDF of $\Gamma$. For any $x$ in $\overline{\Omega}_0$, there exists a unique $Px\in\Gamma$ such that 
\begin{equation}
    \Vert x-Px\Vert_2=\underset{z\in \overline{\Omega}_1}{\inf}\Vert x-z\Vert_2=\underset{z\in\Gamma}{\inf}\Vert x-z\Vert_2=\dist(x,\Gamma)=\phi(x).
\end{equation}
In other words, the projection of an exterior point must belong to $\Gamma$.
\end{lemma}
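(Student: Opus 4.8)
The plan is to recognize this as an instance of the classical projection theorem onto a closed convex set, and then to verify separately that the minimizer must lie on the boundary $\Gamma$ rather than in the interior $\Omega_1$. The set we project onto is $\overline{\Omega}_1 = \Omega_1 \cup \Gamma$, which is compact (it is closed and $\Omega_1$ is bounded) and convex (the closure of a convex set is convex).

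First I would settle existence and uniqueness. Fix $x \in \overline{\Omega}_0$. Since $z \mapsto \Vert x - z \Vert_2$ is continuous and $\overline{\Omega}_1$ is compact, the infimum $\inf_{z \in \overline{\Omega}_1} \Vert x - z \Vert_2$ is attained at some $Px \in \overline{\Omega}_1$; this gives existence. For uniqueness I would use the strict convexity of the Euclidean norm together with the convexity of $\overline{\Omega}_1$: if two distinct points $p, q \in \overline{\Omega}_1$ both realized the minimal distance $r$, then their midpoint $(p+q)/2$ lies in $\overline{\Omega}_1$, and the parallelogram identity gives $\Vert x - (p+q)/2 \Vert_2^2 = r^2 - \tfrac14\Vert p - q \Vert_2^2 < r^2$, contradicting minimality. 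Hence $Px$ is unique.

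Next I would show $Px \in \Gamma$. Suppose instead that $Px \in \Omega_1$. Because $Px$ is then an interior point, there is a ball $B(Px, \epsilon) \subseteq \Omega_1 \subseteq \overline{\Omega}_1$. Since $x \in \overline{\Omega}_0$ means $x \notin \Omega_1$, we have $x \neq Px$; moving from $Px$ a small distance toward $x$ yields a point of $B(Px, \epsilon)$, hence of $\overline{\Omega}_1$, that is strictly closer to $x$ than $Px$ is, contradicting minimality. Therefore $Px \in \Gamma$. When $x \in \Gamma$ the minimizer is simply $Px = x$, which already lies on $\Gamma$.

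Finally I would assemble the chain of equalities. Since $\Gamma \subseteq \overline{\Omega}_1$, we have $\inf_{z \in \overline{\Omega}_1}\Vert x - z \Vert_2 \le \inf_{z \in \Gamma}\Vert x - z \Vert_2$; but the left-hand infimum is attained at $Px \in \Gamma$, which forces equality, so $\Vert x - Px \Vert_2 = \inf_{z \in \overline{\Omega}_1}\Vert x - z \Vert_2 = \inf_{z \in \Gamma}\Vert x - z \Vert_2 = \dist(x, \Gamma)$, and for $x \in \overline{\Omega}_0$ the definition of the SDF gives $\phi(x) = \dist(x, \Gamma)$. I expect the only delicate points to be the uniqueness argument and the verification that the projection cannot be interior; the remaining identifications follow directly from compactness and the definitions.
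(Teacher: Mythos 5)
Your proof is correct and follows essentially the same route as the paper: attain the infimum over the compact convex set $\overline{\Omega}_1$, rule out an interior minimizer by exhibiting a strictly closer point inside a ball $B(Px,\epsilon)\subseteq\Omega_1$ in the direction of $x$, and conclude the chain of equalities. The only difference is cosmetic — you unpack the projection theorem's uniqueness (via the parallelogram identity) and replace its variational inequality with a direct "strictly closer point" contradiction, whereas the paper simply cites the theorem.
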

\begin{proof}
Since $\overline{\Omega}_1$ is complete, for any $x\in\overline{\Omega}_0$, there exists an element $Px\in\overline{\Omega}_1$ such that $\Vert x-Px\Vert_2=\underset{z\in\overline{\Omega}_1}{\inf}\Vert z-x\Vert_2$. If $x\in\Gamma$,  then $x=Px$ and the proof is trivial. If $x\notin\Gamma$, we have $x-Px\neq 0$. If $Px\in\Omega_1$ which is open, there exists an open ball $B(Px,r)$ centered at $Px$ with radius $r>0$ such that $B(Px,r)\subseteq \Omega_1$. Let $z=\frac{r(x-Px)}{2\Vert x-Px\Vert_2}+Px$. We can verify that $z\in B(Px,r)\subseteq \Omega_1$ and $\langle x-Px,z-Px\rangle=\frac{r}{2}\Vert x-Px\Vert_2>0$, which contradicts to the projection theorem \cite[Theorem 4.3-1]{ciarlet2013linear}. Therefore, $Px$ must belong to $\Gamma$ and $\Vert x-Px\Vert_2=\phi(x)$. What's more, by the projection theorem, this $Px$ is unique.
\end{proof}

Actually, one can generalize this result to non-convex $\Omega_1$ without difficulty. The next lemma can be directly derived from the definition of SDF.

\begin{lemma} \label{lemma:deviate_from_surface}
Let $\Gamma$ be the boundary of an open subset $\Omega_1\subset\R^d$ and $\phi$ be the corresponding SDF of $\Gamma$. Then for any element $x\in\overline{\Omega}_1$ and non-negative $c$, the inequality $\phi(x)\leq -c$ is true  if and only if $\overline{B(x,c)}\subseteq \overline{\Omega}_1$, where $B(x,c)=\{z\in\R^d|\Vert z-x\Vert_2<c\}$.
\end{lemma}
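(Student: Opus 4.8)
The plan is to strip the statement down to a purely metric claim and then settle each implication by an elementary topological argument. Since the hypothesis places $x$ in $\overline{\Omega}_1$, the definition of the SDF gives $\phi(x)=-\dist(x,\Gamma)$, so the inequality $\phi(x)\le -c$ is exactly $\dist(x,\Gamma)\ge c$. It therefore suffices to prove that $\dist(x,\Gamma)\ge c$ holds if and only if $\overline{B(x,c)}\subseteq\overline{\Omega}_1$. Throughout I would use the decomposition $\R^d=\Omega_0\sqcup\Gamma\sqcup\Omega_1$, so that $\overline{\Omega}_1=\Omega_1\cup\Gamma$ and its complement is exactly the open exterior $\Omega_0$ (on which $\phi>0$ strictly), together with the continuity of $\phi$ recorded earlier and the compactness of $\Gamma$.

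For the forward implication, I would assume $\dist(x,\Gamma)\ge c$ and take an arbitrary $z\in\overline{B(x,c)}$, i.e. $\Vert x-z\Vert_2\le c$. If $z$ failed to lie in $\overline{\Omega}_1$ it would lie in $\Omega_0$, giving $\phi(z)>0$ while $\phi(x)\le 0$. Applying the intermediate value theorem to the continuous map $t\mapsto\phi\big((1-t)x+tz\big)$ on $[0,1]$ yields a zero, i.e. a point $w\in\Gamma$ on the segment from $x$ to $z$. Then $\dist(x,\Gamma)\le\Vert x-w\Vert_2\le\Vert x-z\Vert_2\le c$, and combined with $\dist(x,\Gamma)\ge c$ this chain must collapse to equalities; in particular $\Vert x-w\Vert_2=\Vert x-z\Vert_2$ with $w$ on the segment forces $w=z$, so $z\in\Gamma\subseteq\overline{\Omega}_1$, a contradiction.

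For the converse I would argue by contraposition: assuming $\dist(x,\Gamma)<c$, compactness of $\Gamma$ lets me realize the infimum at some $w\in\Gamma$ with $\Vert x-w\Vert_2<c$, so $w$ lies in the open ball $B(x,c)$. Because $w$ is a boundary point of $\Omega_1$, every neighborhood of $w$ meets the exterior $\Omega_0$; shrinking to a neighborhood contained in $B(x,c)$ produces $z\in\Omega_0\cap B(x,c)$. This $z$ lies in $\overline{B(x,c)}$ but not in $\overline{\Omega}_1$, so $\overline{B(x,c)}\not\subseteq\overline{\Omega}_1$, as required. I expect the only delicate point to be the open/closed bookkeeping: matching the \emph{closed} ball $\overline{B(x,c)}$ with the \emph{non-strict} inequality $\phi(x)\le -c$ is what makes the equality case in the forward direction (where $w$ is forced onto the sphere $\Vert x-z\Vert_2=c$) the crux, and it is handled cleanly by the ``collapse to equality'' observation above rather than by any quantitative estimate.
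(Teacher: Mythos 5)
The paper does not actually prove this lemma---it is dismissed with the remark that it ``can be directly derived from the definition of SDF''---so there is no official argument to compare against; your write-up supplies the missing details. The reduction to $\dist(x,\Gamma)\ge c$ is exactly right, the forward direction via the intermediate value theorem plus the collapse-to-equality observation is correct (including the degenerate case $c=0$, where $z=x$ settles matters immediately), and arguing the converse by contraposition is the natural route.

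The one step you should not pass over silently is the claim that every neighborhood of a boundary point $w\in\Gamma$ meets $\Omega_0$. This does not follow from the decomposition $\R^d=\Omega_0\sqcup\Gamma\sqcup\Omega_1$ alone; it is the additional requirement $\Gamma\subseteq\overline{\Omega}_0$, i.e.\ that $\Gamma$ is also the boundary of the exterior. For a genuinely arbitrary open set this can fail: take $\Omega_1$ to be an open disk with a radial slit removed, so that $\overline{\Omega}_1$ is the full closed disk; a small closed ball around a point of the slit lies inside $\overline{\Omega}_1$ even though that point is at distance less than the ball's radius from $\Gamma$, and the lemma itself is then false. So the defect sits as much in the lemma's hypotheses (``an open subset'') as in your argument; under the paper's standing assumptions (piecewise smooth boundary separating $\Omega_0$ from $\Omega_1$, and $\Omega_1$ convex in every application) the property $\Gamma=\partial\Omega_0$ holds and your proof goes through. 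It would be worth stating that property explicitly as the hypothesis your converse uses. A minor simplification: compactness of $\Gamma$ is not needed in the converse, since $\dist(x,\Gamma)<c$ already yields some $w\in\Gamma$ with $\Vert x-w\Vert_2<c$ directly from the definition of the infimum.
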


A simple corollary of the Lemma \ref{lemma:deviate_from_surface} is that $\overline{B(x,|\phi(x)|)}\subseteq \overline{\Omega}_1$ for any $x\in \overline{\Omega}_1$. Now, we can give the proof of Theorem \ref{thm:equivalence} as follows:
\begin{proof}
We first assume $\Omega_1$ is convex. Let $x_1$ and $x_2$ be any two elements in $\R^d$ and $x_0=\theta x_1+(1-\theta)x_2$. We would like to show that $\phi(x_0)\leq\theta\phi(x_1)+(1-\theta)\phi(x_2)$ for any $\theta\in[0,1]$. We will divide the proof into three parts. 

(i) First, if $x_1$ and $x_2$ are in $\overline{\Omega}_0$, i.e., $\phi(x_1)\geq 0$ and $\phi(x_2)\geq 0$. By Lemma \ref{lamma:project_to_surface}, there exist unique $y_1\in\Gamma$ and $y_2\in\Gamma$ such that $\phi(x_1)=\Vert x_1-y_1\Vert_2$ and $\phi(x_2)=\Vert x_2-y_2\Vert_2$. If $x_0\in \Omega_0$, let $y_0=\theta y_1+(1-\theta)y_2\in \overline{\Omega}_1$, and we have
\begin{align}
    \phi(x_0)&\leq \Vert x_0-y_0\Vert_2=\Vert\theta(x_1-y_1)+(1-\theta)(x_2-y_2)\Vert_2\\
    &\leq\theta\Vert x_1-y_1\Vert_2+(1-\theta)\Vert x_2-y_2\Vert_2\\
    &=\theta\phi(x_1)+(1-\theta)\phi(x_2).
\end{align}
If  $x_0\in\overline{\Omega}_1$, we have $\phi(x_0)\leq0\leq\Vert x_0-y_0\Vert_2$. 

(ii) If $x_1$ and $x_2$ are in $\overline{\Omega}_1$, then $x_0$ is also in $\overline{\Omega}_1$.  Let $r=\theta|\phi(x_1)|+(1-\theta)|\phi(x_2)|$.  For any element $y\in \overline{B(x_0,r)}$, there exist $v_1$ and $v_2$ such that $y=x_0+\theta v_1+(1-\theta) v_2$, where $\Vert v_1\Vert_2\leq |\phi(x_1)|$ and $\Vert v_2\Vert_2\leq |\phi(x_2)|$. For example, one can take $v_1=\frac{|\phi(x_1)|}{r}$
and $v_2=\frac{|\phi(x_2)|}{r}$.
Then, we can rewrite $y$ as $y=\theta(x_1+v_1)+(1-\theta)(x_2+v_2).$ Since $x_1+v_1\in \overline{B(x_1,|\phi(x_1)|)}\subseteq \overline{\Omega}_1$ and $x_2+v_2\in \overline{B(x_2,|\phi(x_2)|)}\subseteq\overline{\Omega}_1$, we have $y\in \overline{\Omega}_1$. Thus, $\overline{B(x_0,|r|)}\subseteq\overline{\Omega}_1$ and $\phi(x_0)\leq -r=\theta\phi(x_1)+(1-\theta)\phi(x_2)$.

(iii) If $x_1\in \Omega_1$ and $x_2\in\Omega_0$, there exist a $Px_2\in\Gamma$ such that $\phi(x_2)=\Vert x_2-Px_2\Vert_2$. By the continuity of $\phi$, there exist a $x_3=\alpha x_1+(1-\alpha)x_2$ where $\alpha\in(0,1)$, $\phi(x_3)=0$, and $x_0\in\Omega_0$ for any $\theta<\alpha$. Denote $y_0=\alpha x_1+(1-\alpha)Px_2\in\overline{\Omega}_1$. We can compute that
\begin{equation}
    \Vert x_3-y_0\Vert_2=\Vert(1-\alpha)(x_2-Px_2)\Vert_2=(1-\alpha)\phi(x_2).
\end{equation}
By (ii), we have 
\begin{equation}-(1-\alpha)\phi(x_2)=-\Vert x_3-y_0\Vert_2\leq\phi(y_0)\leq\alpha\phi(x_1)+(1-\alpha)\phi(Px_2)=\alpha\phi(x_1).\end{equation}
Consequently, we have $0=\phi(x_3)\leq \alpha\phi(x_1)+(1-\alpha)\phi(x_2)$. For any $1\geq\theta>\alpha$, $x_0=\theta x_1+(1-\theta)x_2\in \overline{\Omega}_1$, and then $x_0=\frac{1-\theta}{1-\alpha}x_3+(1-\frac{1-\theta}{1-\alpha})x_1$. Since $x_0$ and $x_1$ are in $\overline{\Omega}_1$, we know that
\begin{align}
\phi(x_0)&\leq \frac{1-\theta}{1-\alpha}\phi(x_3)+(1-\frac{1-\theta}{1-\alpha})\phi(x_1)\\
&\leq \frac{1-\theta}{1-\alpha}(\alpha\phi(x_1)+(1-\alpha)\phi(x_2))+(1-\frac{1-\theta}{1-\alpha})\phi(x_1)\\
&=\theta\phi(x_1)+(1-\theta)\phi(x_2).\end{align}
If $0\leq\theta<\alpha$, we can similarly derive that
\begin{equation}
    \phi(x_0)\leq \frac{\theta}{\alpha}\phi(x_3)+(1-\frac{\theta}{\alpha}\phi(x_2))\leq \theta\phi(x_1)+(1-\theta)\phi(x_2).
\end{equation}

Based on (i), (ii) and (iii), we can conclude that $\phi(x_0)\leq\theta\phi(x_1)+(1-\theta)\phi(x_2)$ for any $x_1$ and $x_2$ in $\R^d$.

Conversely, If $\phi(x)$ is convex in $\R^d$, by the definition of convex function, all the sublevel-sets of $\phi$ are convex, so is $L_0^-(\phi)=\Omega_1$.
\end{proof}

We have already proved that for any convex shape $\Omega_1$, its corresponding SDF $\phi$ must be a convex function. Consequently, $\phi$ must satisfy the second order condition where it is secondly differentiable. Note that a SDF usually is non-differentiable at a set of points with zero measure, so the second order condition holds almost everywhere in the continuous case. In the numerical computation, since we only care about the convexity of $\Gamma$, to save the computational cost, we can only require the condition holds in a neighbourhood around the object boundary, i.e.,
\begin{equation}
    H(\phi)\geq 0 \text{ in } L_{\epsilon}^-(|\phi|)=\{x\in\mathbb{R}^d||\phi(x)|\leq\epsilon\},
\end{equation}
for some $\epsilon>0$.

Lastly, using the Lemma \ref{lamma:project_to_surface} and Lemma \ref{lemma:deviate_from_surface}, we can prove Lemma \ref{lemma:smallest_SDF} as follows:
\begin{proof}
Suppose $C_1\subseteq C_2$, then we would like to show that $\phi_1(x)\geq\phi_2(x)$ for any $x\in\Omega$. If $x\in\Omega\backslash C_2$, by Lemma \ref{lamma:project_to_surface}, we have
\begin{equation}
    \phi_1(x)=\underset{z\in C_1}{\inf}\Vert z-x\Vert_2\geq\underset{z\in C_2}{\inf}\Vert z-x\Vert_2=\phi_2(x). 
\end{equation}
If $x\in C_2$ but $x\notin C_1$, then $\phi_1(x)\leq 0\leq\phi_2(x)$. If $x\in C_1$, by Lemma \ref{lemma:deviate_from_surface}, then we have
\begin{equation}
    \overline{B(x,|\phi_1(x)|)}\subseteq C_1\subseteq C_2\Rightarrow \phi_1(x)\geq \phi_2(x).
\end{equation}
Therefore, we can conclude that $\phi_1(x)\geq\phi_2(x)$ in $\R^d$.

Conversely, if $\phi_1(x)\geq\phi_2(x)$ in $\Omega$, for any $x\in C_1$, $\phi_2(x)\leq\phi_1(x)\leq 0$ which implies that $C_1\subseteq C_2$. 
\end{proof}

\section{Two variational models with convexity constraint} \label{sec:model}
In this section, we will present the models for two applications involved 
convexity prior in details using the proposed convexity representation method. 
\subsection{Image segmentation with convexity prior}
Given a digital image $u(x)\in \{0,1,\cdots,255\}$ defined on a discrete image domain $\hat{\Omega}$, the goal of image segmentation is to partition it into $n$ disjoint sub-regions based on some features. To achieve this goal, many algorithms have been developed in the literature. Here we consider a 2-phase Potts model \cite{potts1952some} for segmentation:
\begin{equation}
    \underset{\{\hat{\Omega}_j\}_{j=0}^1}{\arg\min}  \sum_{j=0}^1 \sum_{x_i\in{\hat{\Omega}_j}}f_j(x_i)+\sum_{j=0}^1 G(\partial\Omega_i),
\end{equation}
where $f_j$ are the corresponding region force of each class and $G(\partial\hat{\Omega}_j)$ is the regularization term of the class boundaries. In the 2-phase cases, $\hat{\Omega}_1$ usually is the object of interest and $\hat{\Omega}_0$ is the background.

In this paper, we want to consider the segmentation problem with the convexity prior, i.e., the object is known to be convex.
Using the level-set representation and the 
results in last section, we can write the concerned segmentation problem as
\begin{align}
    \underset{\phi}{\arg\min} &\sum_{x_i\in\hat{\Omega}} (f_1(x_i)-f_0(x_i))h(\phi(x_i))+g(x_i)|\nabla h(\phi(x_i))|, \label{eq:seg_model0}\\
    \text{s.t. } & |\nabla\phi(x_i)|=1 \text{ in } \hat{\Omega},\\
    &H(\phi(x_i))\geq 0 \text{ in } L^-_\epsilon(|\phi|),
\end{align}
where $h(\cdot)$ denotes the Heaviside function and $g(x)=\frac{\alpha}{1+\beta|\nabla K*u(x)|}$ is and edge detector. $K$ denotes a smooth Gaussian kernel here. The first unitary gradient constraint can maintain $\phi$ to be a SDF and the second constraint is the convexity constraint. Since we only care about the convexity of the zero level-set, we can only impose this constraint in a neighbourhood around the zero level-set to save the computational effort.

There are many ways to define the data terms $f_1$ and $f_0$. In this work, we choose to use a probability-based region force term as in \cite{luo2018convex}, where the data terms are computed from some prior information. Suppose we know the labels of some pixels as prior knowledge, then we denote $I_1$ as the set of labeled points belonging to phase 1 and $I_0$ as the set of labeled points belonging to phase 0. Define the similarity between any two points in $\Omega$ as 
\begin{equation}
    S(x,y)=\exp\{-a\Vert x-y\Vert^2_2-b\Vert u(x)-u(y)\Vert_2^2\},
\end{equation}
where we will set $a=1$ and $b=10$ in this work. Then, the probability of a pixel $x$ belonging to phase 1 can be computed as
\begin{equation}
    p_1(x)=\frac{\sum_{y_i\in I_1}S(x,y_i)}{\sum_{y_i\in I_1\cup I_2}S(x,y_i)},
\end{equation}
 and the probability of belonging to phase 0 is $p_0(x)=1-p_1(x)$. The region force term is then defined as $f_i(x)=-\log(p_i(x)), i=1,2$. One can refer to \cite{luo2018convex} for more details about the segmentation model.
Therefore, the model (\ref{eq:seg_model0}) can be rewritten as 
\begin{align}
    \underset{\phi}{\arg\min} &\sum_{x_i\in\hat{\Omega}} \left [-\log(p_1(x_i))+\log(p_0(x_i))\right ]h(\phi(x_i))+\mu g(x_i)|\nabla h(\phi(x_i))| \label{eq:seg_model}\\
    \text{s.t. } & |\nabla\phi(x_i)|=1 \text{ in } \hat{\Omega},\\
    & H(\phi(x_i))\geq 0 \text{ in } L^-_\epsilon(|\phi|),\\
    & \phi(x_i)\leq 0 \text{ for any } x_i\in I_1,\quad \phi(x_i)\geq 0 \text{ for any } x_i\in I_0.
\end{align}


The numerical descritization scheme for the differential operators $\nabla(\cdot)$ and $H(\cdot)$ will be introduced later in Section \ref{sec:algo}. Similar to \cite{li2019convex}, we assume that the input image $u$ is periodic in $\R^d$ which implies that $\phi$ satisfies the periodic boundary condition on $\partial\Omega$. Using the fact that $|\nabla h(\phi)|=\delta(\phi)|\nabla\phi|=\delta(\phi)$, the last term in the objective functional can be written as $g(x)\delta(\phi(x))$ where $\delta(\cdot)$ is the Dirac delta function. 
In the numerical computation, we will replace $h(\cdot)$ and $\delta(\cdot)$ by their smooth approximations: 
\begin{align}
    &h(y)\approx h_\alpha(y)=\frac{1}{2}+\frac{1}{\pi}\arctan(\frac{y}{\alpha}),\\
    &\delta(y)\approx h'_\alpha(y)=\delta_\alpha(y)=\frac{\alpha}{\pi(y^2+\alpha^2)},
\end{align}
where $\alpha$ is a small positive number. The segmentation model (\ref{eq:seg_model}) can also be directly generalized to high dimensional cases for object segmentation.

\subsection{Convex hull model}
Suppose we are given a hyper-rectangular domain $\Omega\subset\R^d$ and we want to find the convex hull of a subset $X\subset \Omega$. From the definition, we know that the convex hull is the smallest convex set containing $X$. If we denote the set of all SDFs of convex subset in $\Omega$ as $\mathbb{K}$, the SDF corresponding to the convex hull minimizes the zero sub-level set area (or volume) among $\mathbb{K}$: 
\begin{equation}\min_{\phi\in\mathbb{K}}\int_{\Omega}(1-h(\phi(x)))dx.\end{equation}
By Lemma \ref{lemma:smallest_SDF}, we can obtain an equivalent and simpler form:
\begin{equation}\min_{\phi\in\mathbb{K}}\int_{\Omega}-\phi(x)dx,\end{equation}
which leads to the following discrete problem:
\begin{align}
    \underset{\phi}{\min}\quad &\sum_{x_i\in\hat{\Omega}}-\phi(x_i) \label{eq:exact_convexhull}\\
    \text{s.t.}\quad & |\nabla\phi(x_i)|=1 \text{ in } \hat{\Omega},\\
    & H(\phi(x_i))\geq 0 \text{ in } L^-_{\epsilon}(|\phi|), \label{eq:constraint_convexity}\\
    & \phi(x_i)\leq 0 \text{ for any } x_i\in X.
 \end{align}
This model is a simplified version of the convex hull models in \cite{li2019convex} and can be applied to any dimensions. Here the first two constraints are the same with the segmentation model (\ref{eq:seg_model}). The last constraint requires the zero sublevel-set of $\phi$ contain $X$. Again, we assume $\phi$ satisfies the periodic boundary condition. 

As we mentioned before, when the input data contains outliers, it is not appropriate to require $L_0^-(\phi)$ to enclose all the given data in $X$. Instead, we can use a penalty function $R(x)$ to penalize large $\phi(x)$ for all $x\in X$. By selecting appropriate parameters, we can find an approximated convex hull of the original set with high accuracy. The approximation model is given as:
\begin{align}
    \underset{\phi}{\min}\quad &\sum_{x_i\in\hat{\Omega}}-\phi(x_i)+\lambda m(x_i)R(\phi(x_i)) \label{eq:approximate_convexhull}\\
    \text{s.t.}\quad & |\nabla\phi(x_i)|=1 \text{ in } \hat{\Omega},\\
    & H(\phi(x_i))\geq 0 \text{ in } L^-_\epsilon(|\phi|).
 \end{align}
where $m(x)$ equals to 1 in $X$ and 0 elsewhere. Here we can choose $R$ to be the positive part function 
\begin{equation}
    R(s)=\begin{cases}s & s>0 \\ 0 & s\leq 0,
    \end{cases}
\end{equation}
or its smooth approximation $R(s)=\frac{1}{t}\log(1+e^{ts})$ for some $t>0$.

\section{Numerical algorithm} \label{sec:algo}
Here we propose an efficient algorithm based on the alternating direction method of multipliers (ADMM) to solve the segmentation model (\ref{eq:seg_model}) and the convex hull models (\ref{eq:exact_convexhull}) (\ref{eq:approximate_convexhull}). We will also introduce the descritization scheme for the differential operators.

\subsection{ADMM algorithm for the segmentation model and convex hull models}
First, we write the three models (\ref{eq:seg_model}), (\ref{eq:exact_convexhull}) and (\ref{eq:approximate_convexhull}) in a unified framework:
\begin{align}
     \min_{\phi}\  &F(\phi), \label{eq:unified_model}\\
     \text{s.t. }&|\phi(x_i)|=1 \text{ in } \hat{\Omega},\\
     & H(\phi(x_i))\geq 0 \text{ in } L^-_{\epsilon}(|\phi|),\\
     & \phi\in\mathbb{S},
\end{align}
where $F(\phi)$ could be the objective functional in (\ref{eq:seg_model}), (\ref{eq:exact_convexhull}) or (\ref{eq:approximate_convexhull}), and  $\mathbb{S}$ is defined as
\begin{equation}
    \{\psi:\hat{\Omega}\rightarrow\mathbb{R}|\psi(x_i)\leq 0\text{ in } I_1, \psi(x_i)\geq 0 \in I_0\}.
\end{equation} 
For the segmentation model (\ref{eq:seg_model}), $I_0$ and $I_1$ are those labelled points. For the exact convex hull model (\ref{eq:exact_convexhull}), $I_0$ is the set $X$ and $I_1$ is empty. For the approximate convex hull model (\ref{eq:approximate_convexhull}), both $I_0$ and $I_1$ could be empty. If we have any prior labels for the approximate convex hull model, we can also incorporate them into the models to make $I_0$ and $I_1$ non-empty. 

To solve model (\ref{eq:unified_model}), we introduce three auxiliary variables $p=\nabla\phi$, $Q=H(\phi)$, and $z=\phi$. Then we can write the augmented Lagrangian functional as
\begin{align}
    &L(\phi,p,Q,z,\gamma_1,\gamma_2,\gamma_3)=\sum_{x_i\in\hat{\Omega}}\{-\phi(x_i)+\frac{\rho_1}{2}|\nabla\phi(x_i)-p(x_i)|^2+\frac{\rho_2}{2}\Vert H(\phi(x_i))-Q(x_i)\Vert_F^2\\
    &\quad +\frac{\rho_3}{2}|\phi(x_i)-z(x_i)|^2+\langle\gamma_1(x_i),\nabla\phi(x_i)-p(x_i)\rangle+\langle\gamma_2(x_i),H(\phi(x_i))-Q(x_i)\rangle_F\notag\\
    &\quad +\gamma_3(x_i)(\phi(x_i)-z(x_i))\}\notag
\end{align}
with the following constraints: $|p|=1$ in $\hat{\Omega}$, $Q\geq 0$ in $L_\epsilon^-(|\phi|)$ and $z\in\mathbb{S}$, where $\mathbb{S}$ 
is specified by the problems on hand. 
Applying the ADMM algorithm, we can split the original problem into several sub-problems.
\begin{enumerate}
    \item $\phi$ sub-problem:
    \begin{align}
        \phi^{t+1}=&\underset{\phi}{\arg\min}\  L(\phi,p^t,Q^t,z^t,\gamma_1^t,\gamma_2^t,\gamma_3^t)\notag\\
        =&\underset{\phi}{\arg\min}\   \sum_{x_i\in\hat{\Omega}}-\phi+\frac{\rho_1}{2}|\nabla\phi-p^t|^2+\frac{\rho_2}{2}\Vert H(\phi)-Q^t\Vert_F^2+\frac{\rho_3}{2}|\phi-z^t|^2\notag\\
        &+\langle\gamma_1^t,\nabla\phi-p^t\rangle+\langle\gamma_2^t,H(\phi)-Q^t\rangle_F+\gamma_3^t(\phi-z^t).\notag
    \end{align}
    Then the optimal $\phi$ must satisfy
    \begin{align}
        \nabla^*(\rho_1(\nabla\phi-p^t)+\gamma_1^t)+H^*(\rho_2(H(\phi)-Q^t)+\gamma_2^t)+(\rho_3(\phi-z^t)+\gamma_3^t)=1,\notag
    \end{align}
    where $\nabla^*(\cdot)$ is the adjoint operator of $\nabla(\cdot)$ and $H^*(\cdot)$ is the adjoint operator of $H(\cdot)$. It is equivalent to 
    \begin{align}
        -\rho_1\Delta\phi+\rho_2H^*(H(\phi))+\rho_3\phi=rhs^t_1,
    \end{align}
    where $rhs^t_1$ is a constant vector here. Notice that $H^*(H(\phi))=\Delta^2\phi$, and then the fourth-order PDE can be rewritten as
    \begin{align}
        \rho_2\Delta^2\phi-\rho_1\Delta\phi+\rho_3\phi=rhs^t_1, \label{eq:phi_pde_1}
    \end{align}
    where $rhs_t=1-\nabla^*(-\rho_1p^t+\gamma_1^t)-H^*(-\rho_2Q^t+\gamma_2^t)-(-\rho_3z^t+\gamma_3^t)$ is a constant vector here. Since $\phi$ satisfies the periodic boundary condition, we can apply FFT as \cite{li2019convex} to solve it.
    
    \item $p$ sub-problem:
    \begin{align}
        p^{t+1}=&\underset{|p|=1}{\arg\min}\  L(\phi^{t+1},p,Q^t,z^t,\gamma_1^t,\gamma_2^t,\gamma_3^t)\notag\\
        =&\underset{|p|=1}{\arg\min}\  \sum_{x_i\in\hat{\omega}}\frac{\rho_1}{2}|\nabla\phi^{t+1}-p|^2+\langle\gamma_1^t,\nabla\phi^{t+1}-p\rangle\notag\\
        =&\frac{\gamma_1^t/\rho_1+\nabla\phi^{t+1}}{|\gamma_1^t/\rho_1+\nabla\phi^{t+1}|}.\label{eq:p_update}
    \end{align}
    \item $Q$ sub-problem:
    \begin{align}
        Q^{t+1}=&\underset{Q\in L_\epsilon^-(|\phi|)}{\arg\min}\ L(\phi^{t+1},p^{t+1},Q,z^t,\gamma_1^t,\gamma_2^t,\gamma_3^t)\notag\\
        =&\underset{Q\in L_\epsilon^-(|\phi|)}{\arg\min}\ \sum_{x_i\in\hat{\Omega}}\frac{\rho_2}{2}\Vert H(\phi^{t+1})-Q\Vert_F^2+\langle \gamma_2^t,H(\phi^{t+1})-Q\rangle_F\notag\\
        =&\begin{cases}
        \gamma_2^t/\rho_2+H(\phi^{t+1}) & \text{ for } x\notin L_\epsilon^-(|\phi|)\\
        \Pi_{\text{PSD}}\{\gamma_2^t/\rho_2+H(\phi^{t+1})\} &\text{ for }x\in L_\epsilon^-(|\phi|)
        \end{cases}.\label{eq:Q_update}
    \end{align}
    The computation of this projection is very simple and can be done by the same way with \cite{rosman2014fast}. Given a real symmetric matrix $A\in\mathbb{R}^{n\times n}$, suppose it's singular value decomposition is $A=Q\Lambda Q^T$, where $Q$ is orthonormal and $\Lambda=\text{diag}(\lambda_1,\dots,\lambda_n)$ is a diagonal matrix with all the singular values of $A$ on its diagonal entries. We further assume that $\lambda_1,\dots,\lambda_m\geq 0$ and $\lambda_{m+1},\dots,\lambda_n<0$. If we define 
    \begin{equation}\Lambda^+=\text{diag}(\max \{ \lambda_1,0\},\dots,\max\{\lambda_n,0\}),\end{equation} 
    then the projection of $A$ onto the set of all symmetric positive semi-definite (SPSD) matrices under the Frobenius norm is $A^*=Q\Lambda^+Q^T$. The proof can be found in \cite{higham1988matrix}.

    
    \item $z$ sub-problem:
    \begin{align}
        z^{t+1}=&\underset{z\in\mathbb{S}}{\arg\min}\ L(\phi^{t+1},p^{t+1},Q^{t+1},z,\gamma_1^t,\gamma_2^t,\gamma_3^t)\\
        =&\underset{z\in\mathbb{S}}{\arg\min} \sum_{x_i\in\hat{\Omega}}\frac{\rho_3}{2}|\phi^{t+1}-z|^2+\gamma_3^t(\phi^t-z)\\
        =&\begin{cases}
        \min\{0,\gamma_3^t/\rho_3+\phi^{t+1}\} &x\in I_1\\
        \max\{0,\gamma_3^t/\rho_3+\phi^{t+1}\} &x\in I_0\\
        \gamma_3^t/\rho_3+\phi^{t+1} &x\notin I_1\cup I_0
        \end{cases}.\label{eq:z_update}
    \end{align}
\end{enumerate}
We summarize this procedure in Algorithm \ref{algo:ADMM}.
\begin{algorithm}[ht]
\caption{ADMM algorithm for variational models with convexity prior}
\begin{algorithmic}[1]
    \State Input the parameters $\rho_1$, $\rho_2$ and $\rho_3$.
    \State Initialize $\phi^0$ to be the SDF of an initial shape. Initialize $p^0$, $Q^0$, $z^0$, $\gamma_1^0$, $\gamma_2^0$ and $\gamma_3^0$ to be all zeros.
    \While{stopping criterion is not satisfied}
        \State compute $\phi^{t+1}$ by solving the PDE (\ref{eq:phi_pde_1}) which can be done by applying FFT on both side of the equation as \cite{li2019convex}.
        \State compute $p^{t+1}$ by (\ref{eq:p_update}).
        \State compute $Q^{t+1}$ by (\ref{eq:Q_update}).
        \State compute $z^{t+1}$ by (\ref{eq:z_update}).
    \EndWhile
\end{algorithmic}\label{algo:ADMM}
\end{algorithm}

\subsection{Numerical descritization scheme}
Suppose we are given a volumetric data $I\in\R^{N_1\times\dots\times N_d}$ which is a binary function defined on the discrete domain $\Omega_h$. $I$ can also be viewed as a characteristic function of a subset $X_h\subseteq\Omega_h$. Each mesh point in $\Omega_h$ can be represented by a d-tuple $\mathbf{x}=(x_1,\dots,x_d)$ where $x_i$ is an integer in $[1,N_i]$ for $i=1,\dots,d$. To compute the convex hull of $X_h$ using the algorithm described before, we need first to approximate the differential operators numerically. For any function $\phi:\Omega_h\rightarrow\R^{N_1\times\dots\times N_d}$, we denote $\partial_i^+(\mathbf{x})$ and $\partial_i^-(\mathbf{x})$ as
\begin{align}
    &\partial_i^+\phi(\mathbf{x})=\begin{cases}\phi(x_1,\dots,x_i+1,\dots,x_d)-\phi(x_1,\dots,x_i,\dots,x_d) & x_i< N_i\\
    \phi(x_1,\dots,x_1,\dots,x_d)-\phi(x_1,\dots,x_{N_i},\dots,x_d) & x_i=N_i
    \end{cases},\\
    &\partial_i^-\phi(\mathbf{x})=\begin{cases}\phi(x_1,\dots,x_i,\dots,x_d)-\phi(x_1,\dots,x_i-1,\dots,x_d) &x_i>1\\
    \phi(x_1,\dots,x_1,\dots,x_d)-\phi(x_1,\dots,x_{N_i},\dots,x_d) &x_i=1
    \end{cases}.
\end{align}
These are exactly the standard forward difference and backward difference. Then we approximate the first and second order operators as follow:
\begin{align}
    & \nabla\phi\approx\tilde{\nabla}\phi=(\partial^+_1\phi,\dots,\partial^+_d\phi)^T,\\
    & \nabla^*p\approx\tilde{\nabla}^*\phi=-\sum_{i=1}^d \partial^-_i p_i,
\end{align}
and
\begin{align}
    &\Delta\phi\approx\tilde{\Delta}\phi=-\tilde\nabla^*(\tilde{\nabla}\phi), \\
    & H(\phi)_{ij}\approx\tilde{H}(\phi)_{ij}=\begin{cases}
    \partial^-_i(\partial^+_j\phi) & \text{ if } i=j \\
    \partial^+_i(\partial^+_j\phi)& \text{ if } i\neq j
    \end{cases},\\ 
    & H^*(Q)\approx\tilde{H}^*(Q)=\sum_{i=1}^n \partial^-_i\partial^+_i Q_{ii}+\sum_{i\neq j}\partial^-_i\partial^-_j Q_{ij}. 
\end{align}
One can verify that the relation $\tilde{H}^*(\tilde{H}(\phi))=\tilde{\Delta}^2(\phi)$ is also preserved by this discretization scheme. 

\section{Numerical experiment} \label{sec:exp}
In this section, we are going to demonstrate many numerical experiments of the segmentation models and convex hull models.

\subsection{Image segmentation and object segmentation}
We first test our segmentation model for some images from \cite{luo2018convex}. In Figure \ref{fig:cell}, we test the segmentation model (\ref{eq:seg_model}) on a tumor image (a). Since the model needs some prior information, we give the prior labels in (b) and the initial curve in (c). One can observe that there exists a dark region in this tumor. If we do not impose the convexity constraint, the result we get is shown in (d) where the dark region is missing. After imposing the convexity constraint, we can recover the whole tumor region. The result of our algorithm is shown in (f) which is very similar to the result in \cite{luo2018convex} (e).

\begin{figure}[ht]
    \centering
    \subfloat[original image]{\includegraphics[width=2.5cm]{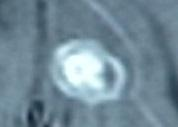}}\quad
    \subfloat[prior labels]{\includegraphics[width=2.5cm]{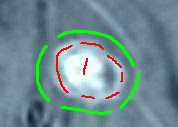}}\quad
    \subfloat[initial curve]{\includegraphics[width=2.5cm]{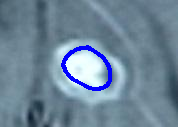}}\\
    \subfloat[without convexity constraint]{\includegraphics[width=2.5cm]{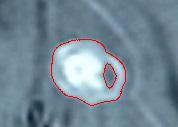}}\quad
    \subfloat[with convexity constraint by \cite{luo2018convex}]{\includegraphics[width=2.5cm]{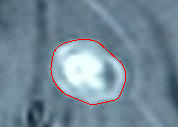}}\quad
    \subfloat[with our convexity constraint]{\includegraphics[width=2.5cm]{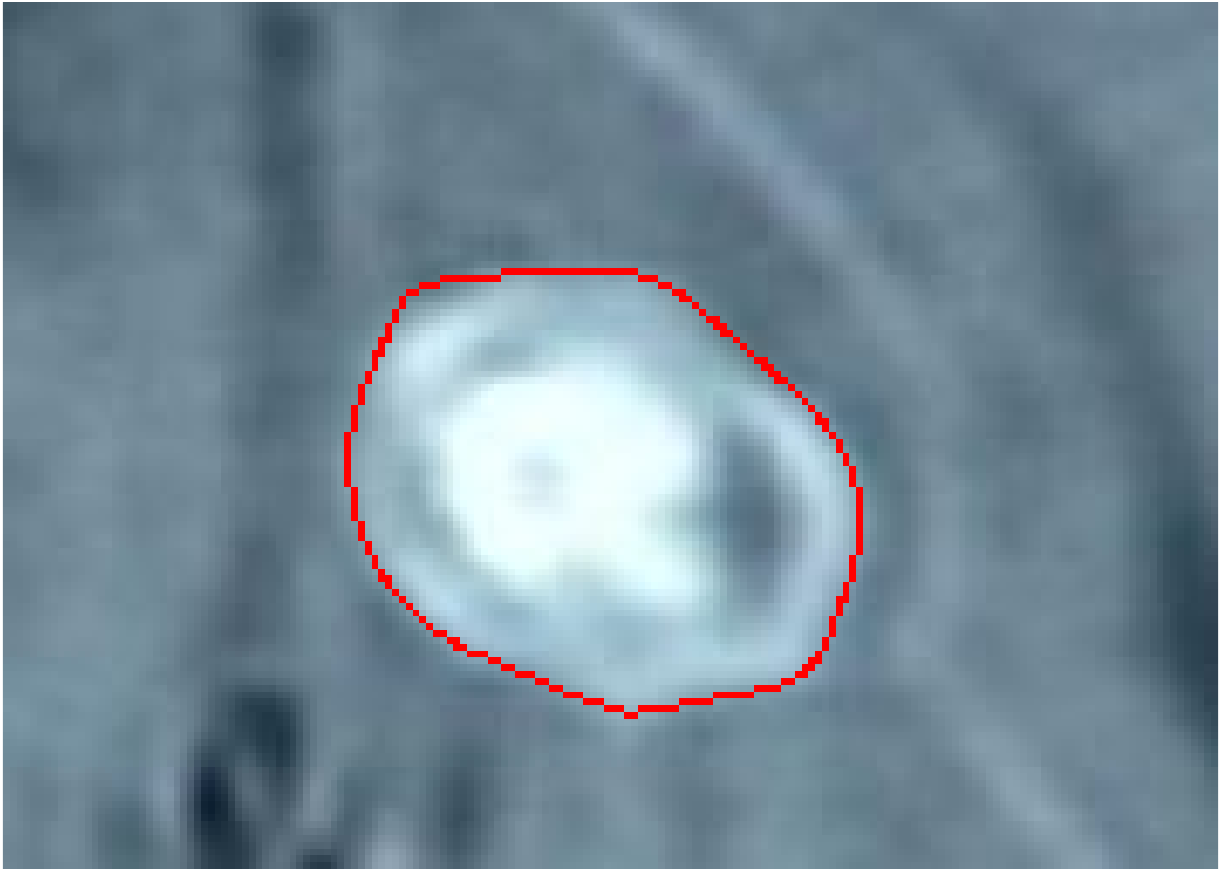}}
    \caption{Segmentation results of a tumor with and without convexity constraint.}
    \label{fig:cell}
\end{figure}

We also test the segmentation model on a 3-D data of brain tumor from \cite{isensee2017brain}. The size of this volume is $240\times 240\times 152$. In Figure \ref{fig:tumor3d_original}, we show some cross sections of the original volume. Then we give some prior labels at these 9 slices as in Figure \ref{fig:tumor3d_prior} to compute the region force term. The initial region is taken as the set where the region force is positive. One can observe that the initial region shows some concavity in several slices, e.g, $z=90$. We then compute the segmentation result using our proposed method. The 3-d visualization of the segmentation result is shown in Figure \ref{fig:tumor3d_vis} and some cross sections are presented in Figure \ref{fig:tumor3d_results}. Though the initial shape is not convex, we can see that the convexity method can obtain a convex shape. What's more, the tumor region is also identified accurately.

\begin{figure}[ht]
    \centering
    \subfloat[$z=30$]{\includegraphics[width=2cm]{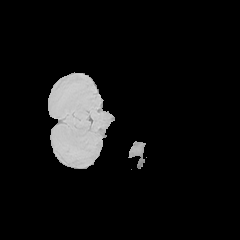}}\quad
    \subfloat[$z=40$]{\includegraphics[width=2cm]{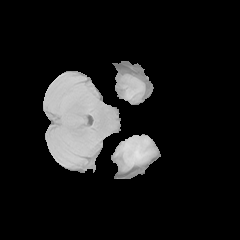}}\quad
    \subfloat[$z=50$]{\includegraphics[width=2cm]{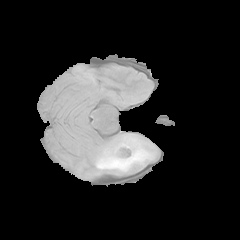}}\\
    \subfloat[$z=60$]{\includegraphics[width=2cm]{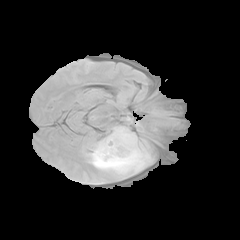}}\quad
    \subfloat[$z=70$]{\includegraphics[width=2cm]{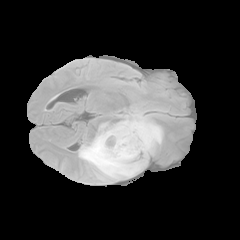}}\quad
    \subfloat[$z=80$]{\includegraphics[width=2cm]{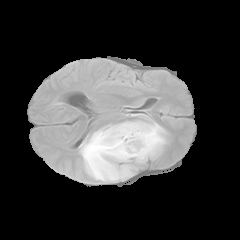}}\\
    \subfloat[$z=90$]{\includegraphics[width=2cm]{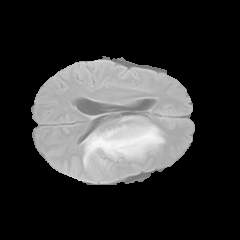}}\quad
    \subfloat[$z=100$]{\includegraphics[width=2cm]{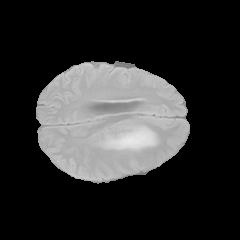}}\quad
    \subfloat[$z=110$]{\includegraphics[width=2cm]{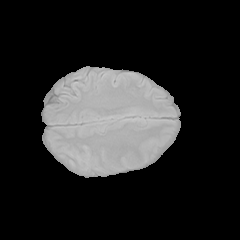}}
    \caption{Original 3-D volume of a brain tumor}
    \label{fig:tumor3d_original}
\end{figure}

\begin{figure}[p]
    \centering
    \subfloat[$z=30$]{\includegraphics[width=2cm]{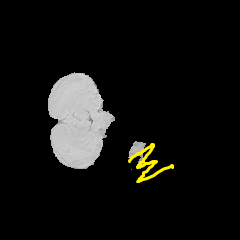}}\quad
    \subfloat[$z=40$]{\includegraphics[width=2cm]{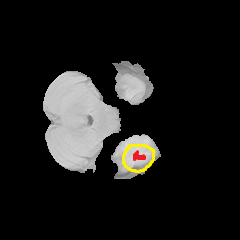}}\quad
    \subfloat[$z=50$]{\includegraphics[width=2cm]{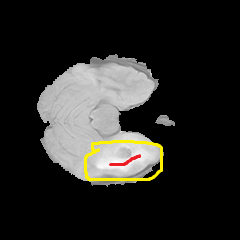}}\\
    \subfloat[$z=60$]{\includegraphics[width=2cm]{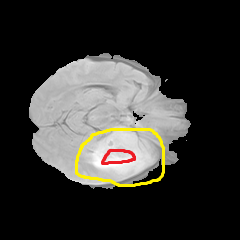}}\quad
    \subfloat[$z=70$]{\includegraphics[width=2cm]{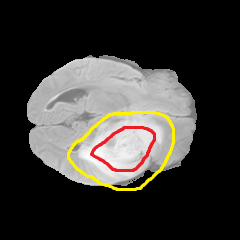}}\quad
    \subfloat[$z=80$]{\includegraphics[width=2cm]{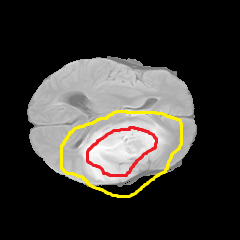}}\\
    \subfloat[$z=90$]{\includegraphics[width=2cm]{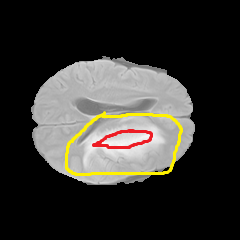}}\quad
    \subfloat[$z=100$]{\includegraphics[width=2cm]{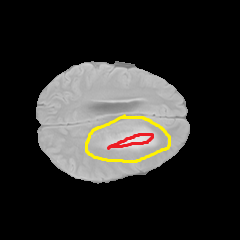}}\quad
    \subfloat[$z=110$]{\includegraphics[width=2cm]{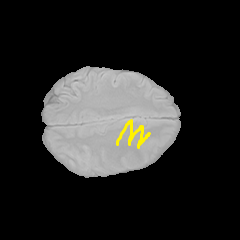}}
    \caption{Prior labels assigned for the tumor. Red points represent foreground labels and yellow points represent background labels. }
    \label{fig:tumor3d_prior}
\end{figure}

\begin{figure}[p]
    \centering
    \subfloat[$z=30$]{\includegraphics[width=2cm]{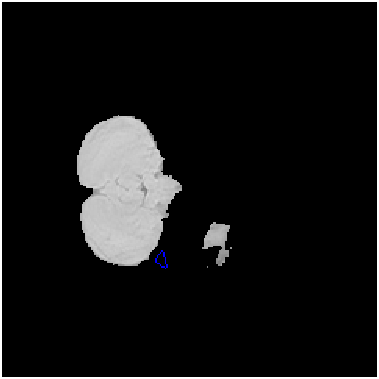}}\quad
    \subfloat[$z=40$]{\includegraphics[width=2cm]{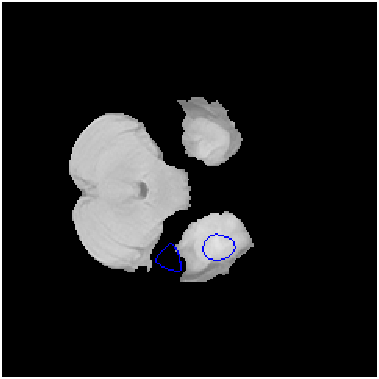}}\quad
    \subfloat[$z=50$]{\includegraphics[width=2cm]{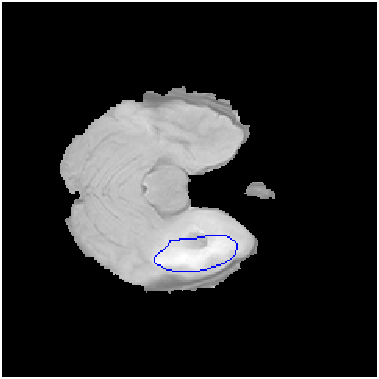}}\\
    \subfloat[$z=60$]{\includegraphics[width=2cm]{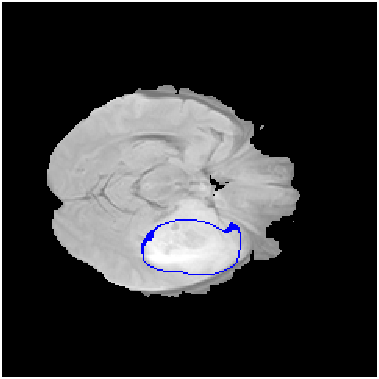}}\quad
    \subfloat[$z=70$]{\includegraphics[width=2cm]{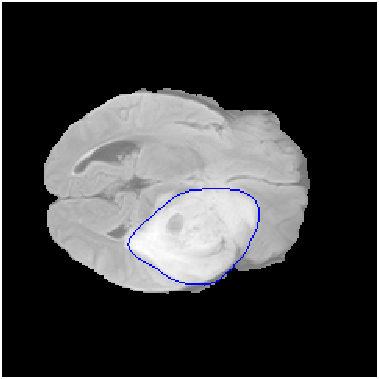}}\quad
    \subfloat[$z=80$]{\includegraphics[width=2cm]{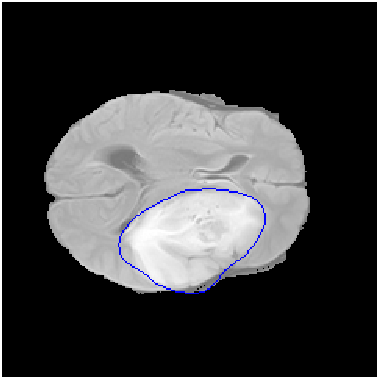}}\\
    \subfloat[$z=90$]{\includegraphics[width=2cm]{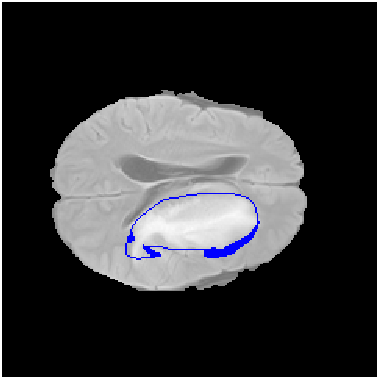}}\quad
    \subfloat[$z=100$]{\includegraphics[width=2cm]{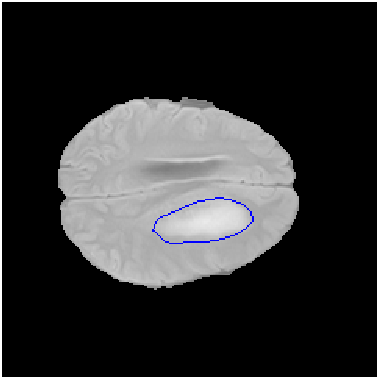}}\quad
    \subfloat[$z=110$]{\includegraphics[width=2cm]{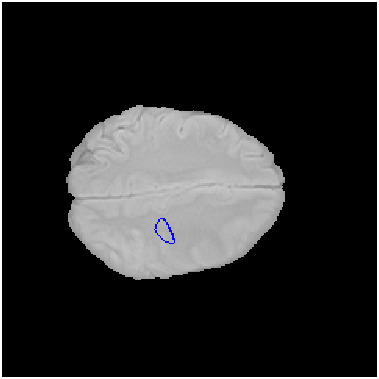}}
    \caption{Some cross sections of the initial shape for the model. The initial shape is the set of points at which the region force term is positive.}
    \label{fig:initial_curve}
\end{figure}

\begin{figure}[p]
    \centering
    \subfloat[]{\includegraphics[width=3cm]{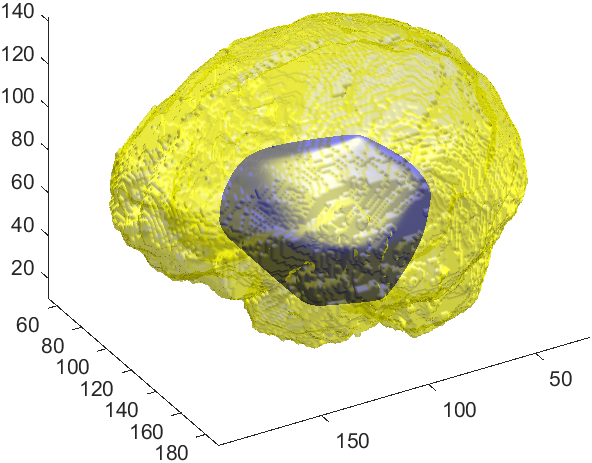}}\quad
    \subfloat[]{\includegraphics[width=3cm]{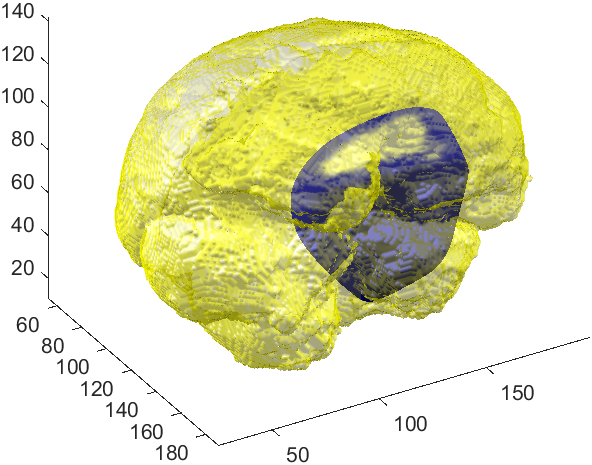}}\quad
    \subfloat[]{\includegraphics[width=3cm]{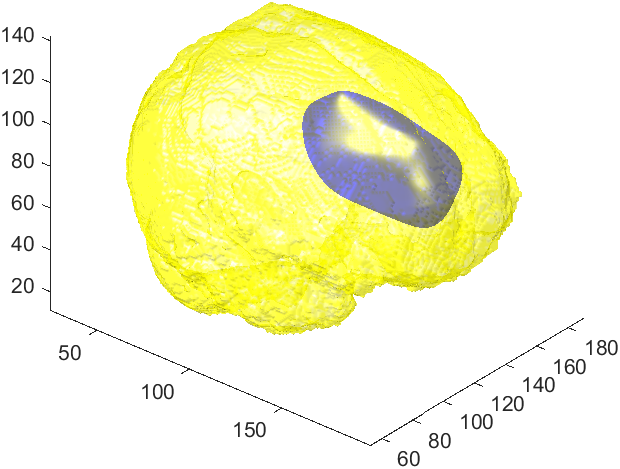}}
    \caption{Different views of the 3D visualization of the tumor.}
    \label{fig:tumor3d_vis}
\end{figure}

\begin{figure}[p]
    \centering
    \subfloat[$z=40$]{\includegraphics[width=2cm]{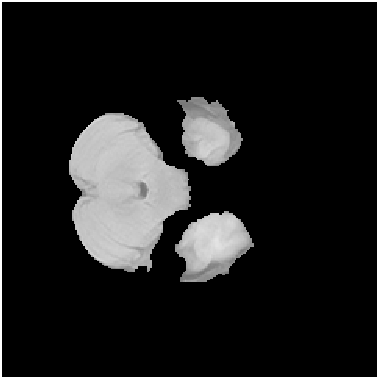}}\quad
    \subfloat[$z=50$]{\includegraphics[width=2cm]{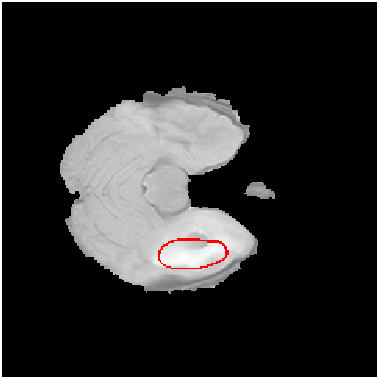}}\quad
    \subfloat[$z=60$]{\includegraphics[width=2cm]{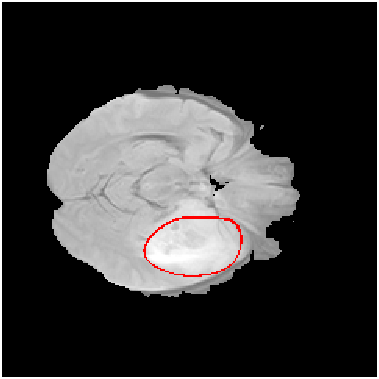}}\quad
    \subfloat[$z=70$]{\includegraphics[width=2cm]{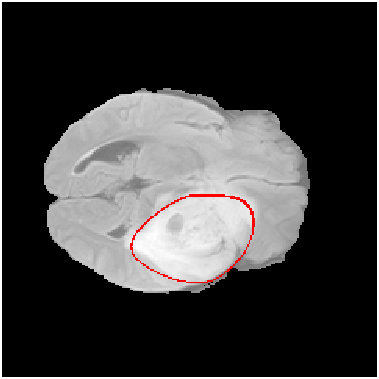}}\\
    \subfloat[$z=80$]{\includegraphics[width=2cm]{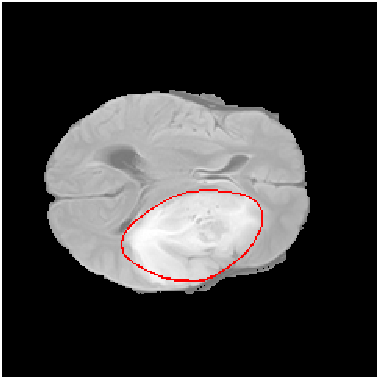}}\quad
    \subfloat[$z=90$]{\includegraphics[width=2cm]{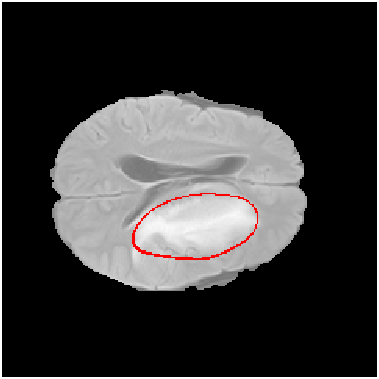}}\quad
    \subfloat[$z=100$]{\includegraphics[width=2cm]{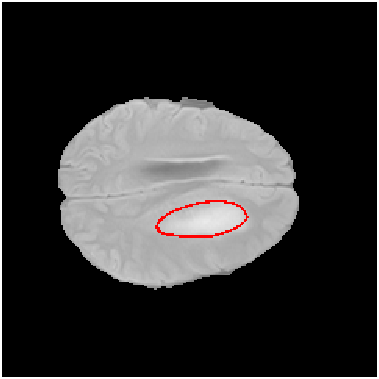}}\quad
    \subfloat[$z=110$]{\includegraphics[width=2cm]{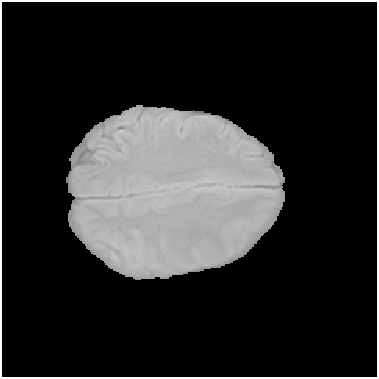}}\\
    \subfloat[$y=90$]{\includegraphics[width=2cm]{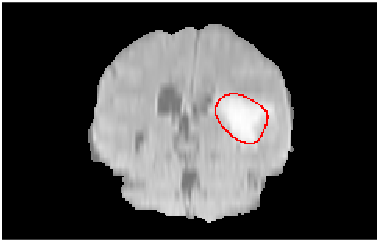}}\quad
    \subfloat[$y=100$]{\includegraphics[width=2cm]{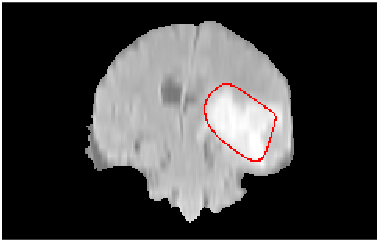}}\quad
    \subfloat[$y=110$]{\includegraphics[width=2cm]{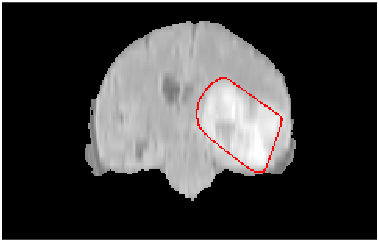}}\quad
    \subfloat[$y=120$]{\includegraphics[width=2cm]{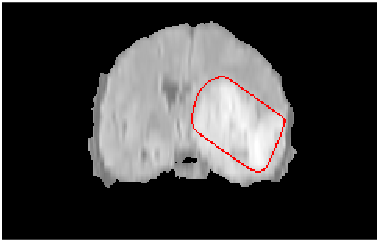}}\\
    \subfloat[$y=130$]{\includegraphics[width=2cm]{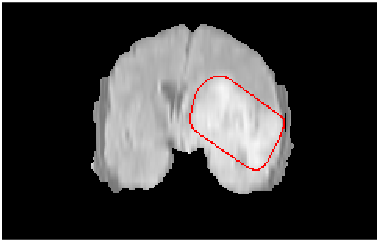}}\quad
    \subfloat[$y=140$]{\includegraphics[width=2cm]{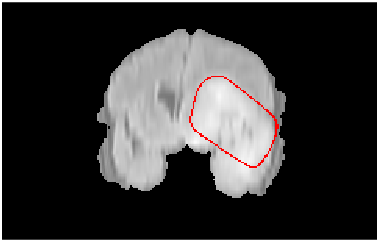}}\quad
    \subfloat[$y=150$]{\includegraphics[width=2cm]{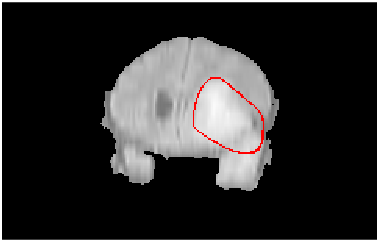}}\quad
    \subfloat[$y=160$]{\includegraphics[width=2cm]{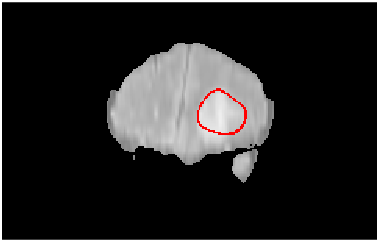}}\\
    \subfloat[$x=120$]{\includegraphics[width=2cm]{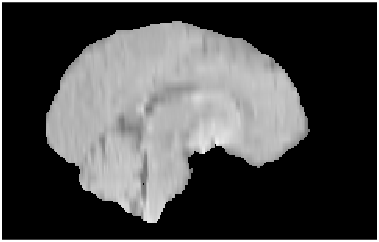}}\quad
    \subfloat[$x=130$]{\includegraphics[width=2cm]{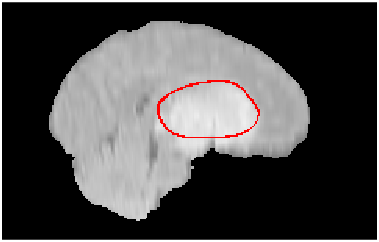}}\quad
    \subfloat[$x=140$]{\includegraphics[width=2cm]{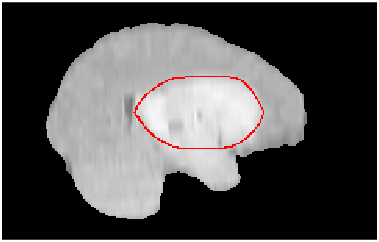}}\quad
    \subfloat[$x=150$]{\includegraphics[width=2cm]{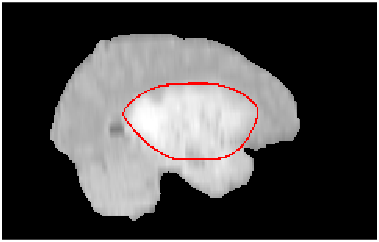}}\\
    \subfloat[$x=160$]{\includegraphics[width=2cm]{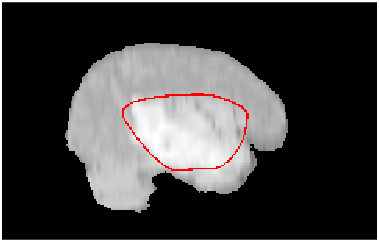}}\quad
    \subfloat[$x=170$]{\includegraphics[width=2cm]{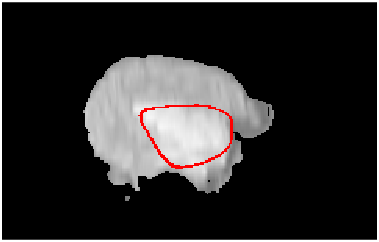}}\quad
    \subfloat[$x=180$]{\includegraphics[width=2cm]{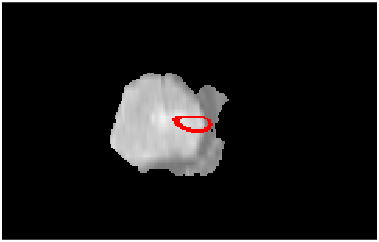}}\quad
    \subfloat[$x=185$]{\includegraphics[width=2cm]{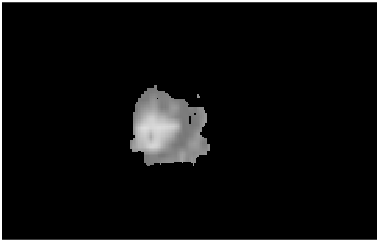}}
    \caption{Different cross sections of the segmentation result of the brain tumor.}
    \label{fig:tumor3d_results}
\end{figure}

\subsection{Convex hull model}
First we use three images from \cite{AlpertGBB07} to test our exact convex hull model for 2-d data. In \cite{li2019convex}, the authors have conducted many experiments on the same dataset. Here we demonstrate the results of 3 examples in Figure \ref{fig:2d_convhulls}. The numerical results are very closed to \cite{li2019convex} and many other traditional methods. We also compute the relative distance error of our method compared to the quickhull algorithm. The error measure is defined in the same way with \cite{li2019convex}:
\begin{align}
    &\text{err}(C_2)=\frac{\text{dist}_H(C_1,C_2)}{R},\label{eq:error}
\end{align}
where $R$ is the equivalent radius and is defined as the radius of the ball with same area with $C_1$. The error of our method and \cite{li2019convex} are listed in Table \ref{table:error} where you can see the results are very closed. 
\begin{figure}[ht]
    \centering
    \subfloat[boat]{\includegraphics[width=2.5cm]{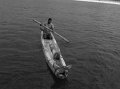}}\quad 
    \subfloat[helicopter]{\includegraphics[width=2.5cm]{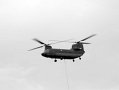}}\quad 
    \subfloat[Tendrils]{\includegraphics[width=2.5cm]{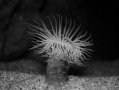}}\\
    \subfloat[boat]{\includegraphics[width=2.5cm]{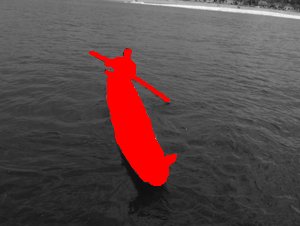}}\quad 
    \subfloat[helicopter]{\includegraphics[width=2.5cm]{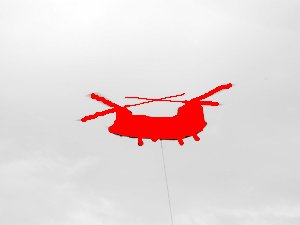}}\quad 
    \subfloat[Tendrils]{\includegraphics[width=2.5cm]{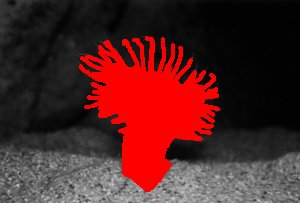}}\\
    \subfloat[boat]{\includegraphics[width=2.5cm]{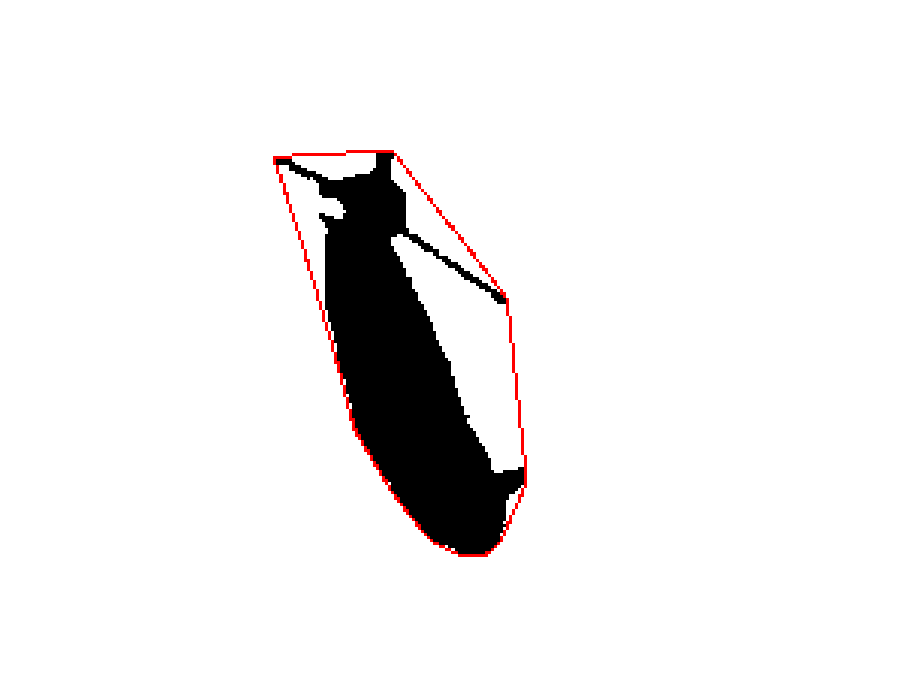}}\quad 
    \subfloat[helicopter]{\includegraphics[width=2.5cm]{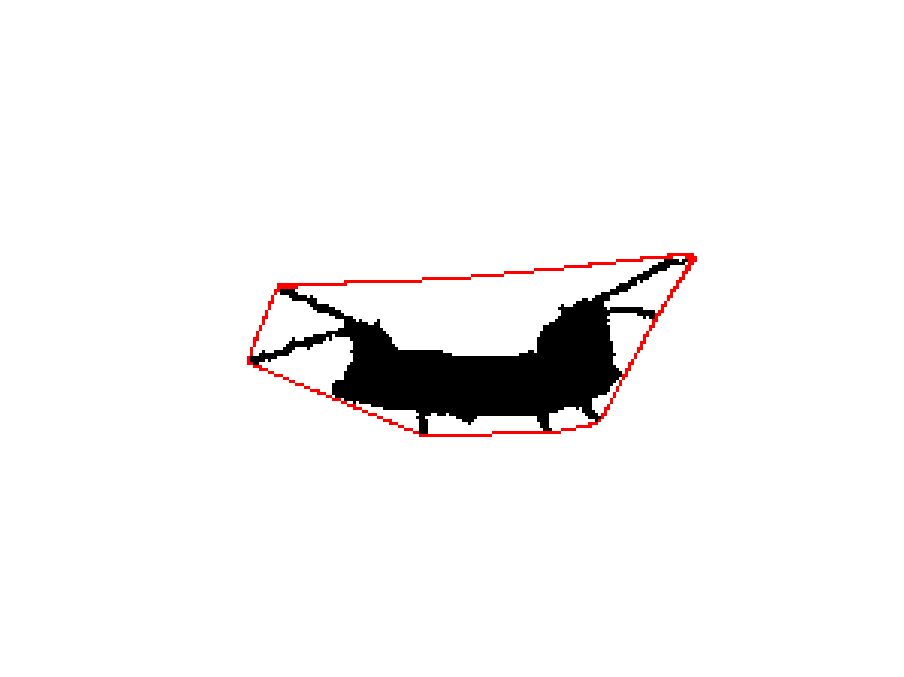}}\quad 
    \subfloat[Tendrils]{\includegraphics[width=2.5cm]{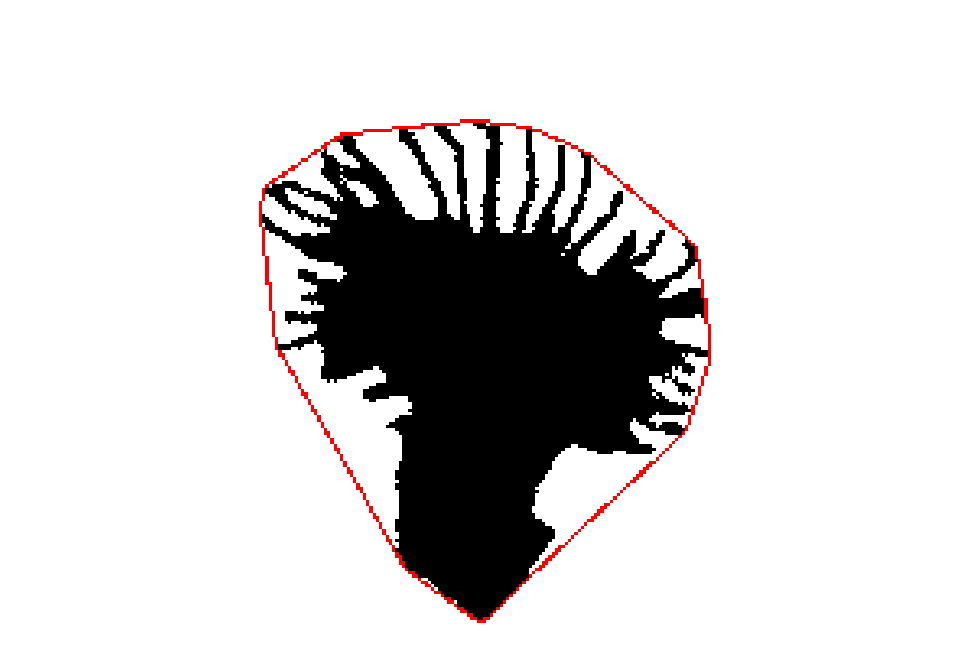}}
    \caption{Convex hulls of some 2-dimensional objects. The first row contains three images from \cite{AlpertGBB07} and the second row contains the corresponding ground true segmentation results. In the last row, we show the convex hulls of each object obtained by our proposed exact model.}
    \label{fig:2d_convhulls}
\end{figure}

\begin{table}
\caption{The relative errors compared with the quickhull result}
\centering
\begin{tabular}{@{}cccc@{}}
\toprule
method  & \textbf{boat}  & \textbf{Helicopter} & \textbf{Tendrils} \\ \midrule
our method & 1.08\%  & 1.13\%  & 1.03\% \\ 
Li et al.'s method \cite{li2019convex}  & 1.08\% & 1.13\% &0.73\%                 \\ \bottomrule
\end{tabular}
\label{table:error}
\end{table}

Then we turn our interest to 3-dimensional cases. We first conducted several experiments on some 3-dimensional shapes from the ShapeNet dataset \cite{shapenet2015}. We choose a chair object and a table object, and then compute their convex hulls using our algorithm. The original object and computed convex hulls are shown in Figure \ref{fig:chair} and \ref{fig:table}. For this set of experiments, we use $\rho_2=2000$, $\rho_3=10$, $\rho_1=2\sqrt{\rho_2\rho_3}$ and $\epsilon=10$. We also plot some level-set surfaces of the computed SDF in Figure \ref{fig:isosufaces}. For all the level-set surfaces up to $10$, we can see that they are convex, since we require the Hessian matrix is PSD in $L_{10}(|\phi|)$. When we look at the 15-level-set surfaces, we can easily find concavity. Similar to the 2D cases, we also compute the error compared to a benchmark result obtained by the  quickhull algorithm. The error is computed by (\ref{eq:error}) with $R$ defined as the radius of a 3D ball having the same volume with the benchmark convex hull. The error of the chair and table objects are $5.18\%$ and $4.31\%$ respectively.

\begin{figure}
    \centering
    \subfloat[]{\includegraphics[width=2.5cm]{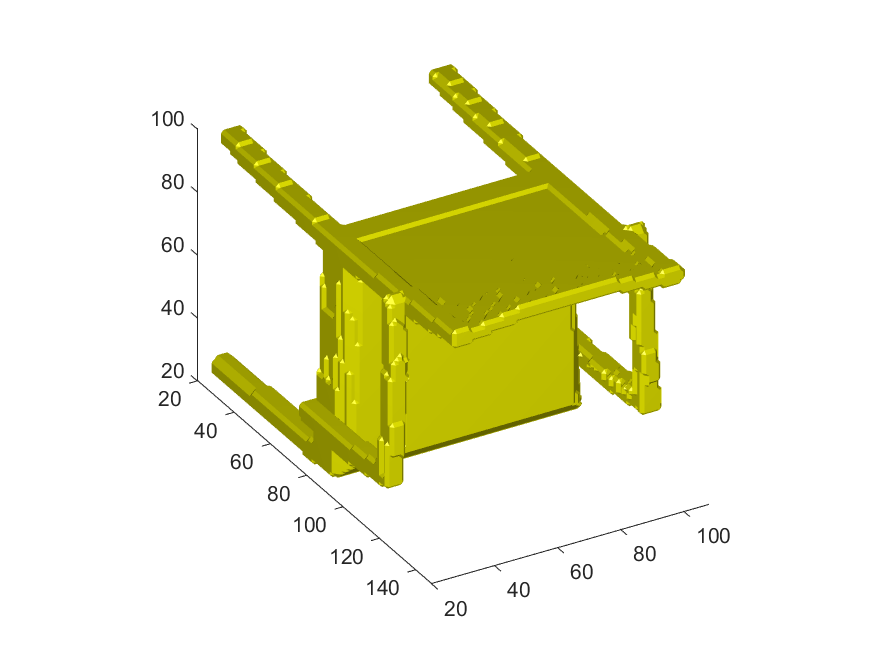}}\quad 
    \subfloat[]{\includegraphics[width=2.5cm]{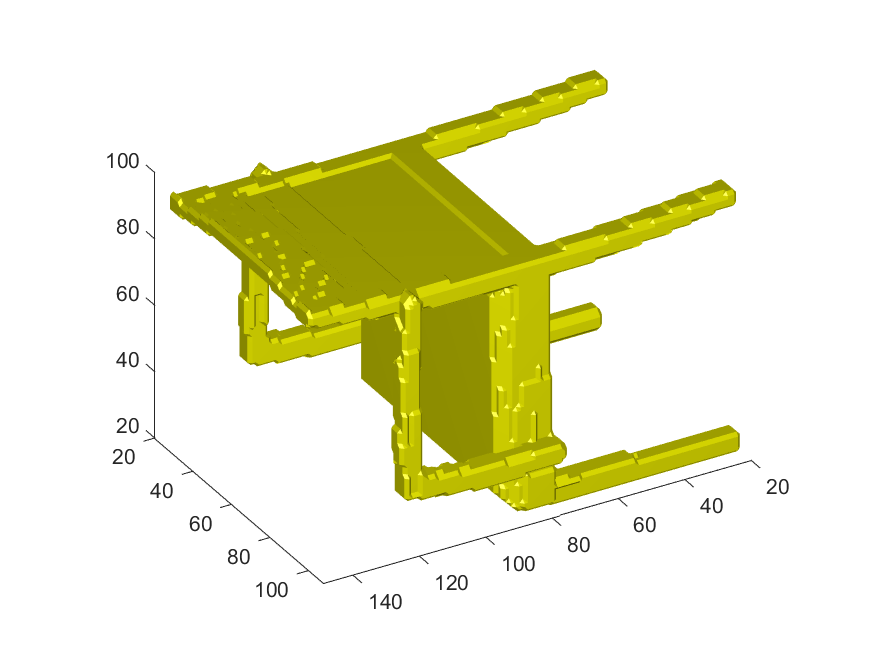}}\quad 
    \subfloat[]{\includegraphics[width=2.5cm]{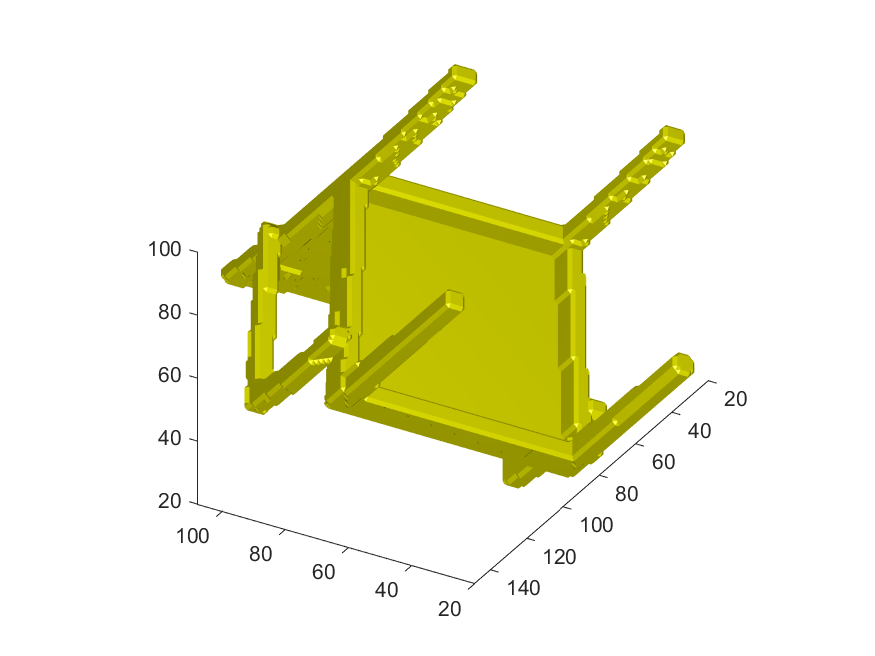}}\\
    \subfloat[]{\includegraphics[width=2.5cm]{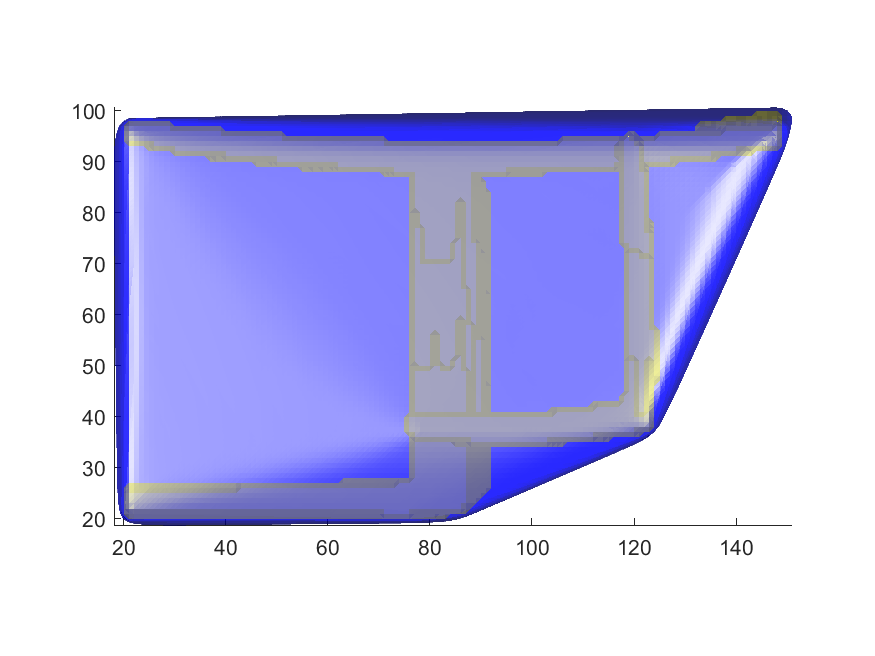}}\quad 
    \subfloat[]{\includegraphics[width=2.5cm]{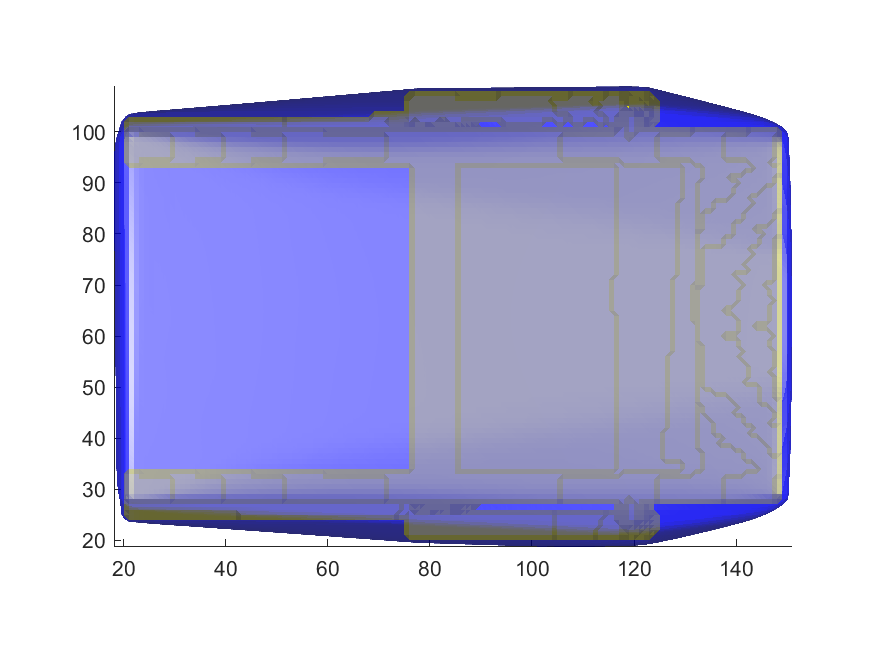}}\quad 
    \subfloat[]{\includegraphics[width=2.5cm]{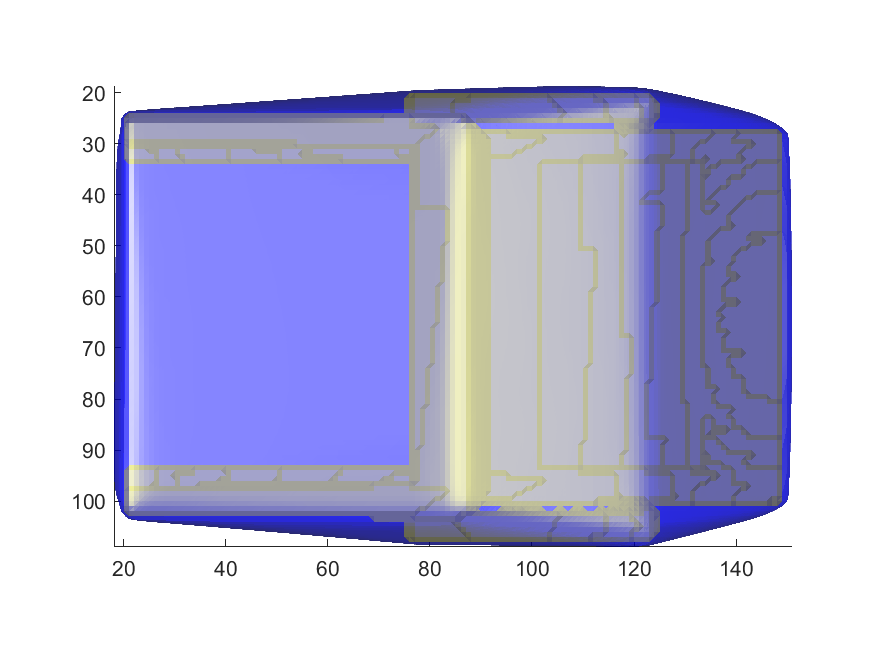}}\\
    \subfloat[]{\includegraphics[width=2.5cm]{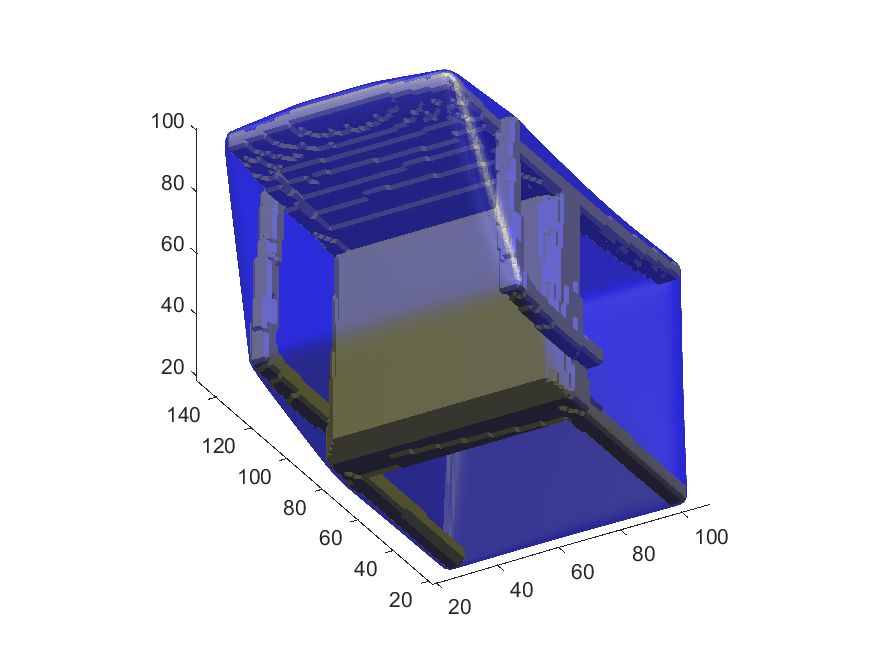}}\quad
    \subfloat[]{\includegraphics[width=2.5cm]{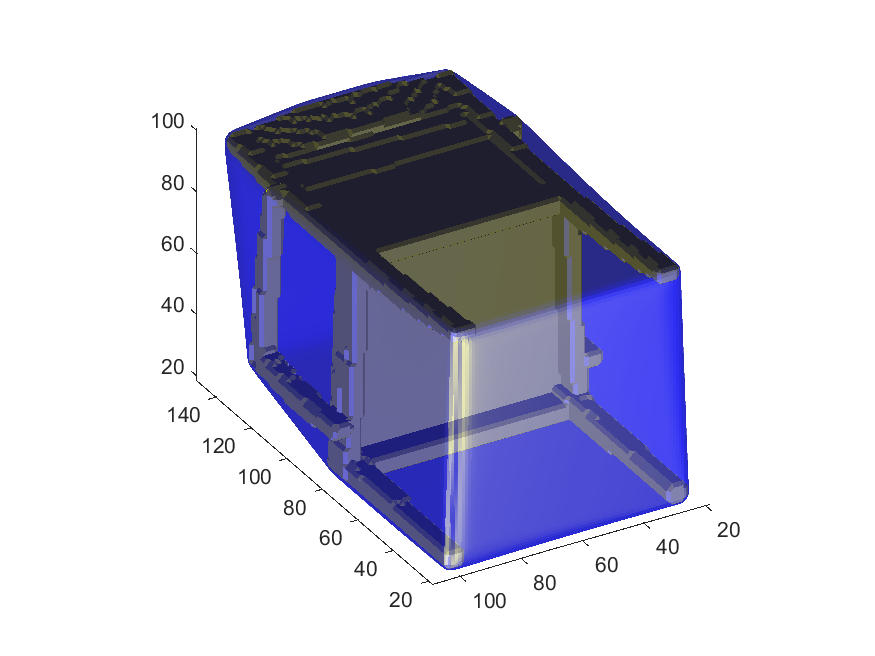}}\quad 
    \subfloat[]{\includegraphics[width=2.5cm]{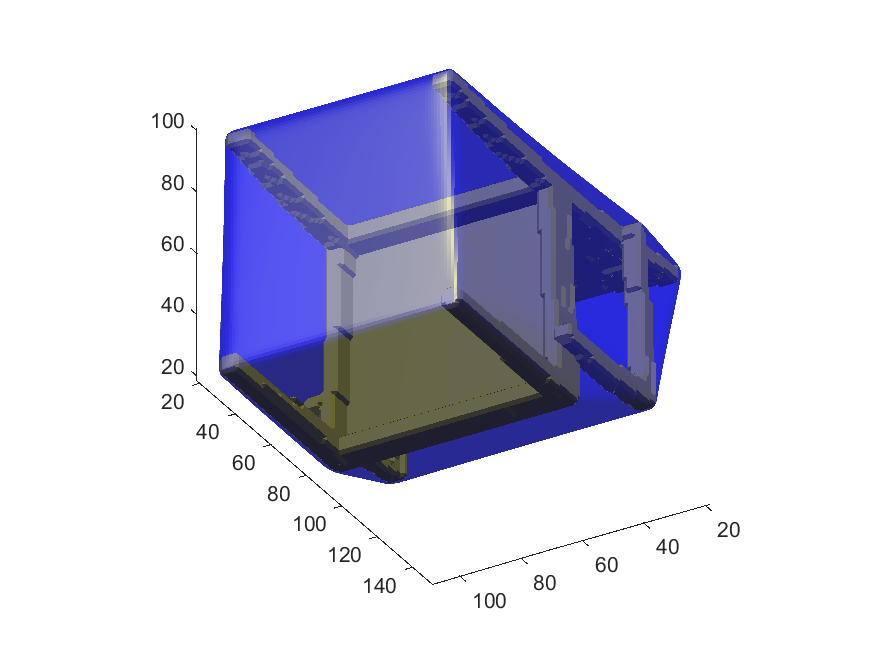}}
    \caption{Convex hull of a chair. The first row shows the original volumetric data and the last two rows show different views of the convex hull.}
    \label{fig:chair}
\end{figure}

\begin{figure}
    \centering
    \subfloat[]{\includegraphics[width=2.5cm]{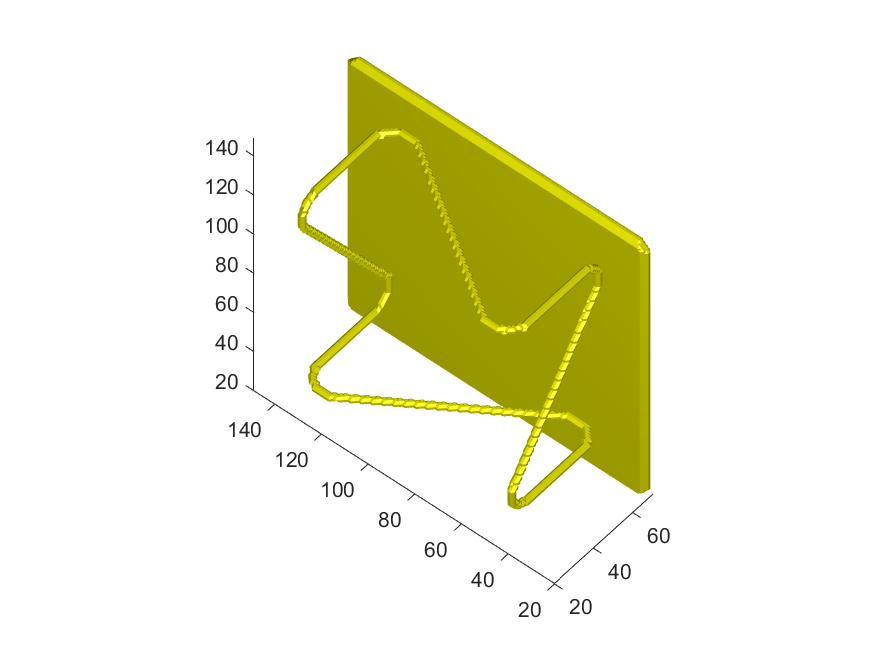}}\quad 
    \subfloat[]{\includegraphics[width=2.5cm]{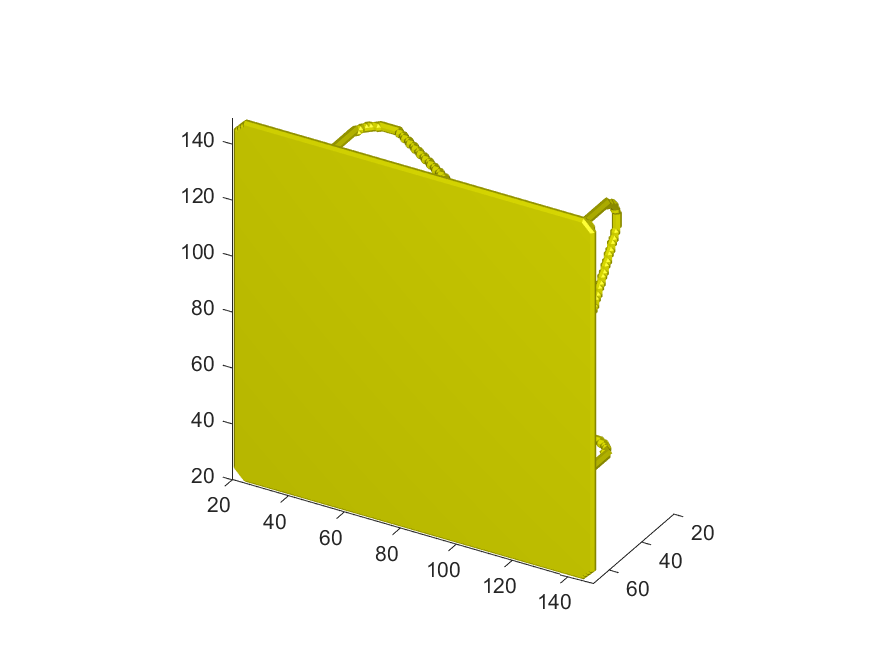}}\quad 
    \subfloat[]{\includegraphics[width=2.5cm]{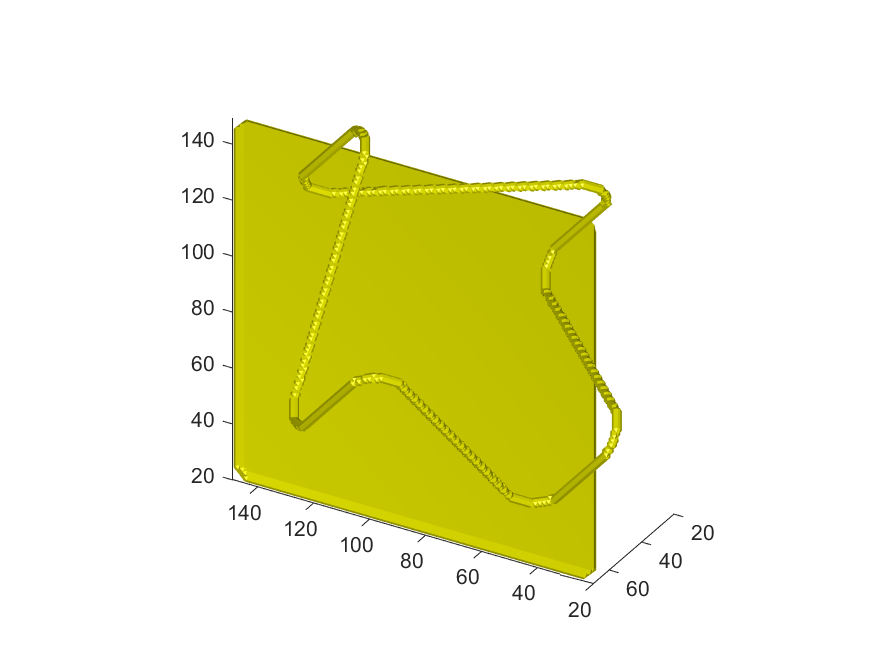}}\\
    \subfloat[]{\includegraphics[width=2.5cm]{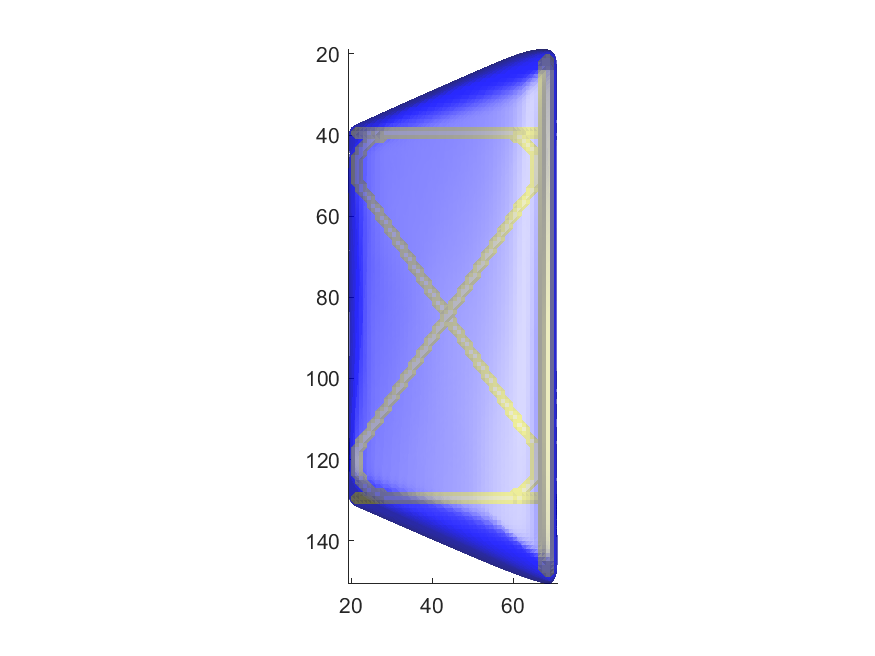}}\quad 
    \subfloat[]{\includegraphics[width=2.5cm]{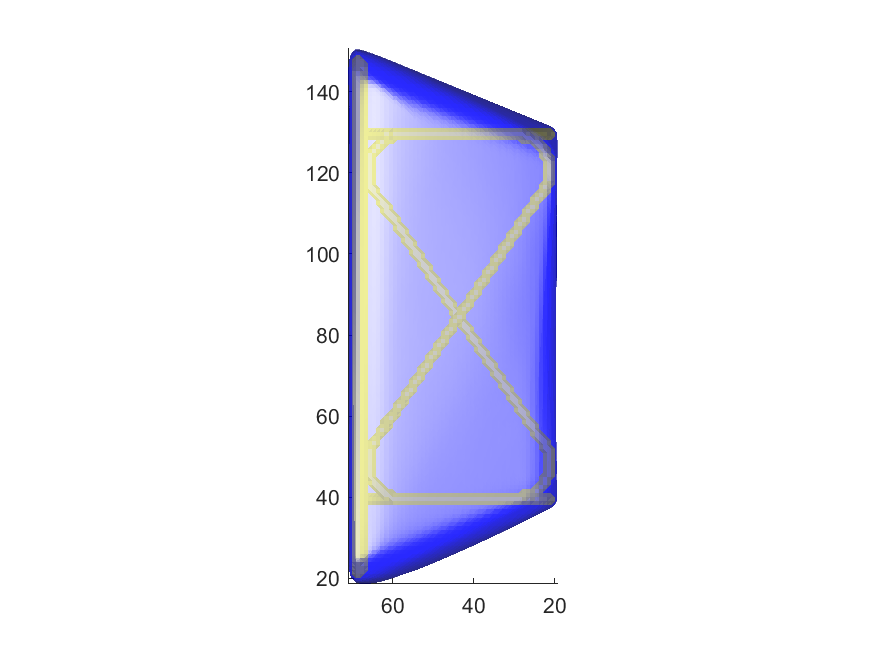}}\quad 
    \subfloat[]{\includegraphics[width=2.5cm]{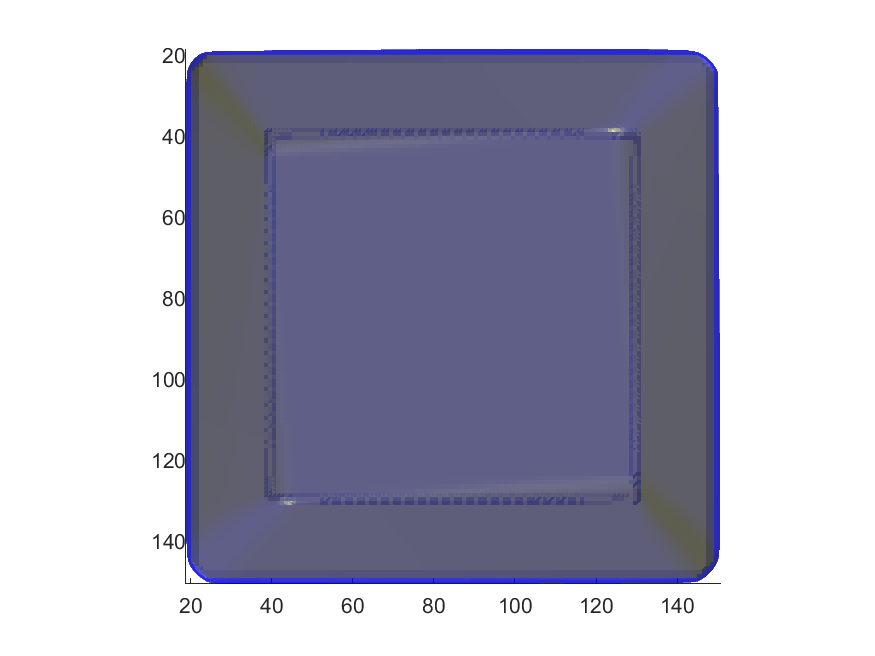}}\\
    \subfloat[]{\includegraphics[width=2.5cm]{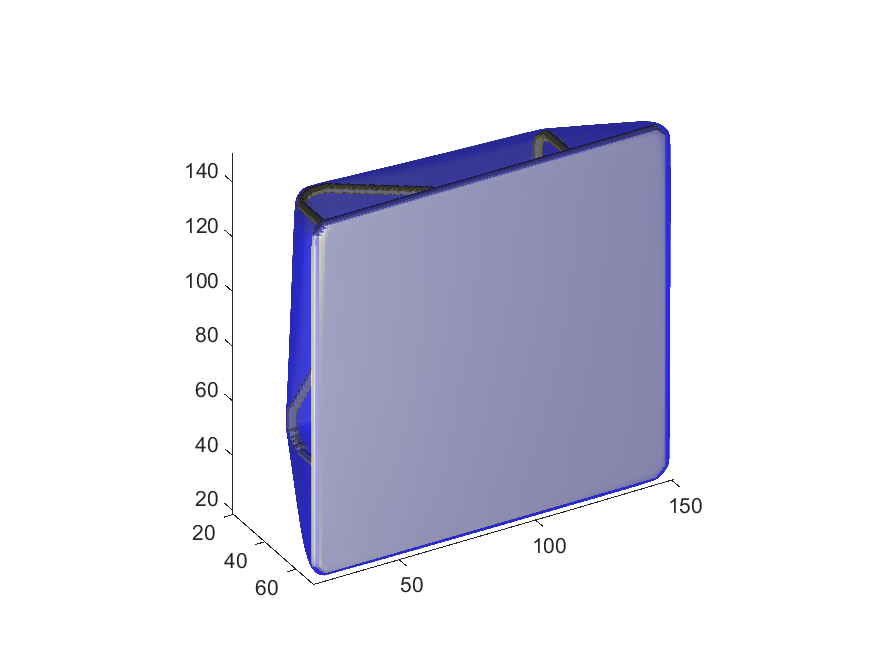}}\quad
    \subfloat[]{\includegraphics[width=2.5cm]{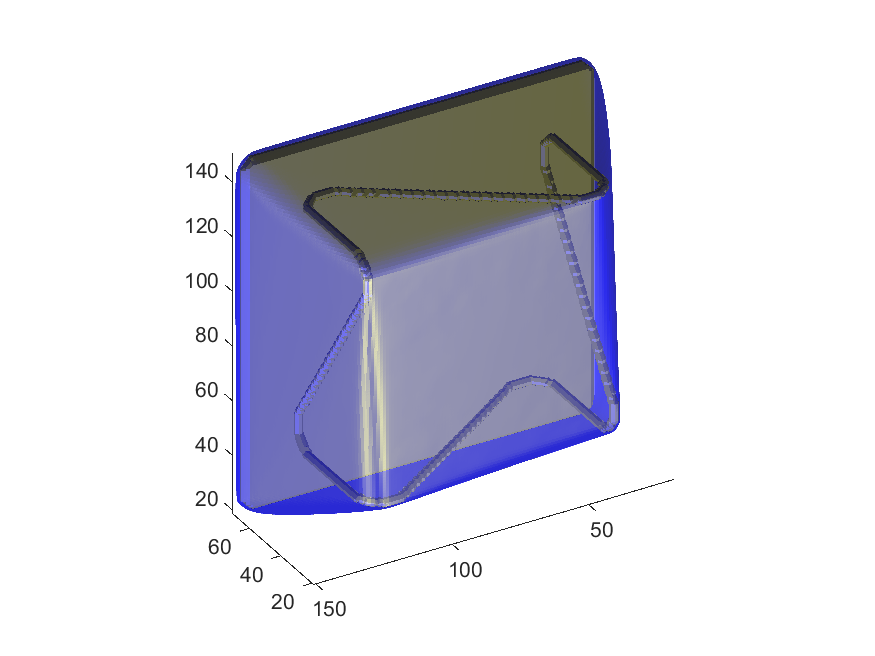}}\quad 
    \subfloat[]{\includegraphics[width=2.5cm]{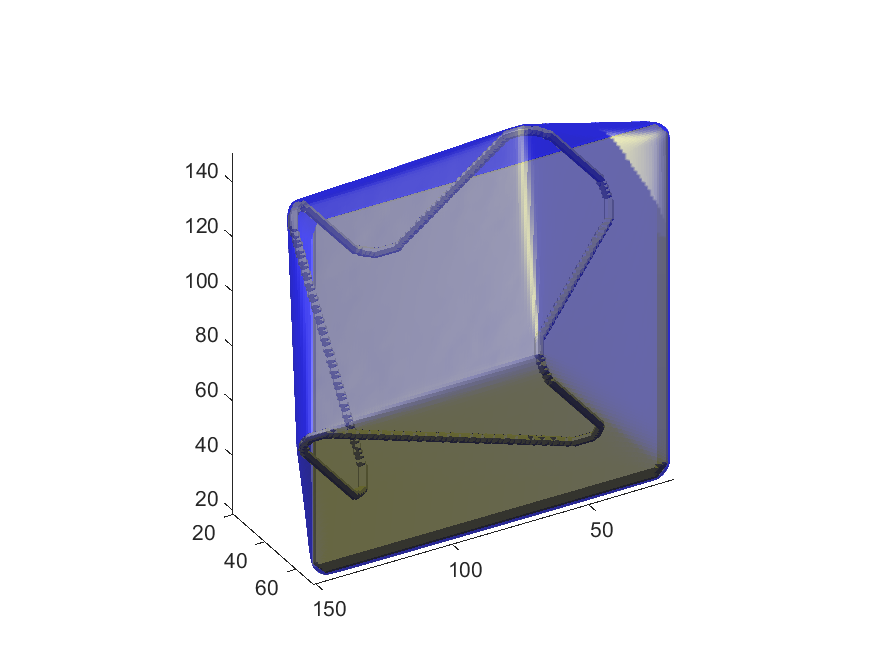}}
    \caption{Convex hull of a table. The first row shows the original volumetric data and the last two rows show different views of the convex hull.}
    \label{fig:table}
\end{figure}


\begin{figure}[p]
    \centering
    \subfloat[]{\includegraphics[width=3cm]{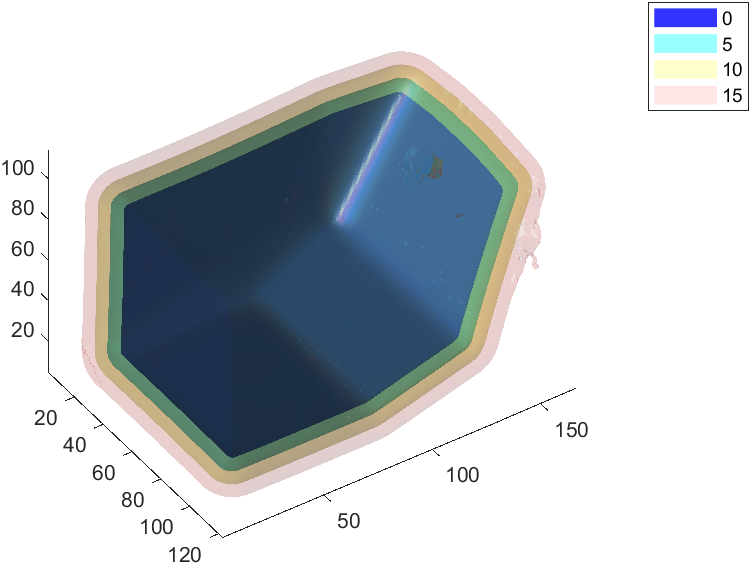}}\quad 
    \subfloat[]{\includegraphics[width=3cm]{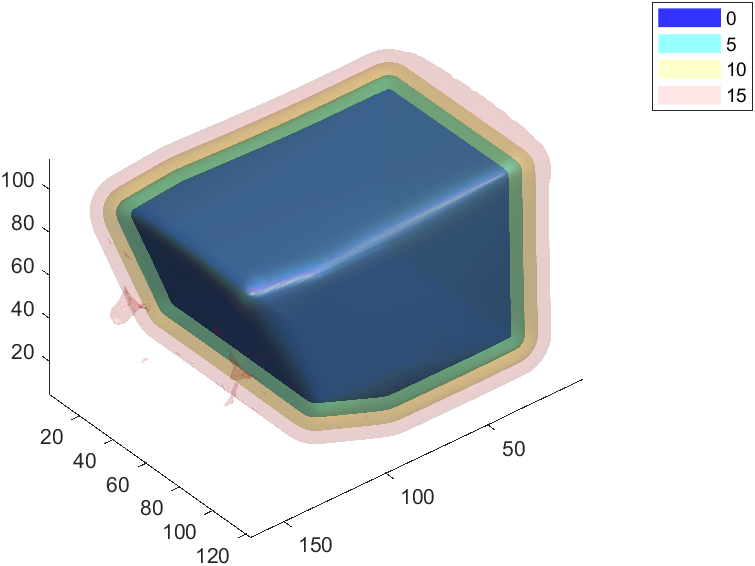}}\quad 
    \subfloat[]{\includegraphics[width=3cm]{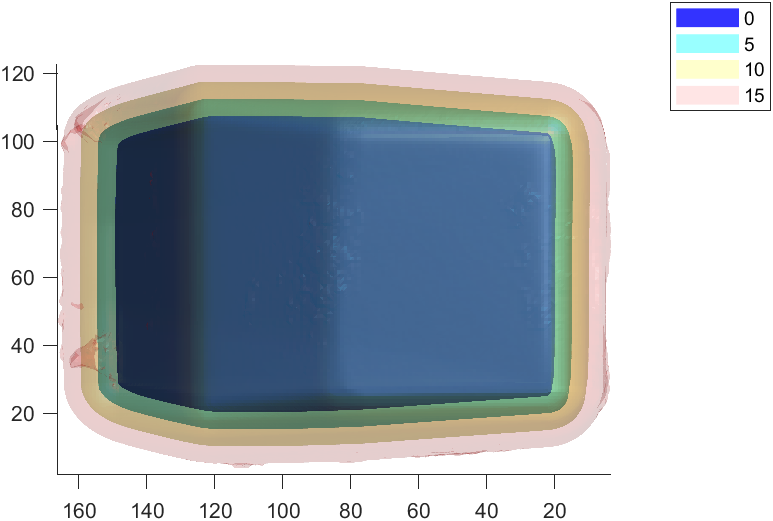}}\\
    \subfloat[]{\includegraphics[width=3cm]{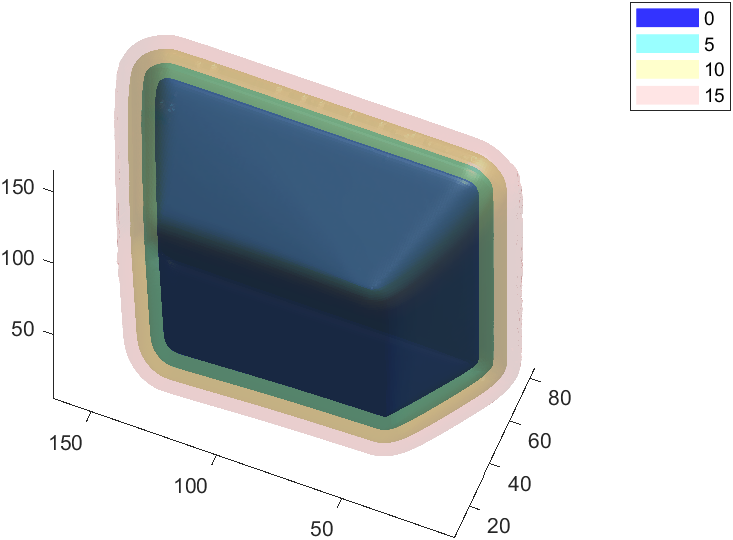}}\quad 
    \subfloat[]{\includegraphics[width=3cm]{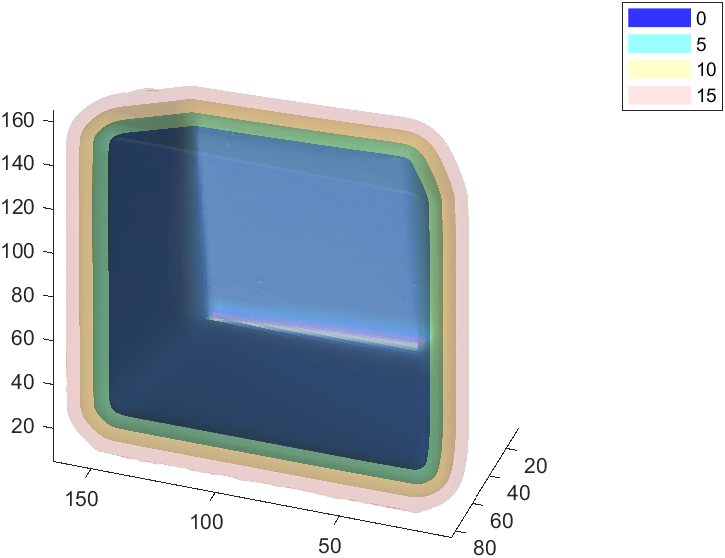}}\quad 
    \subfloat[]{\includegraphics[width=3cm]{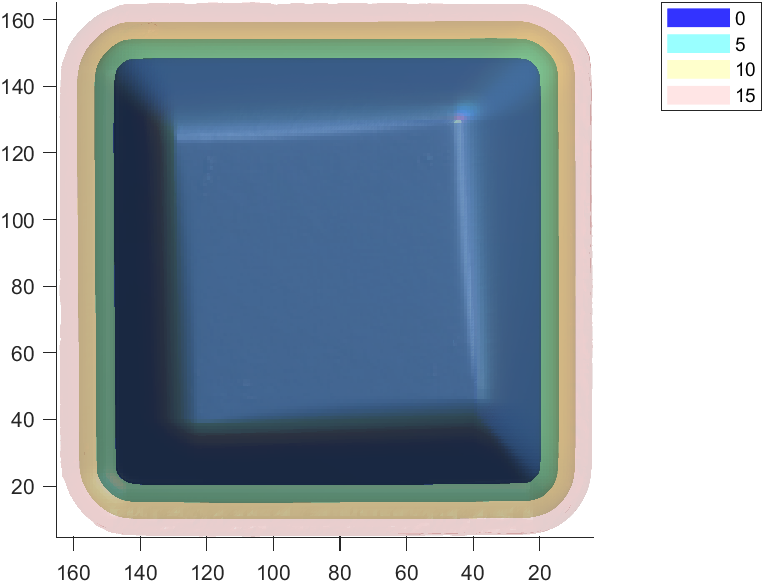}}
    \caption{Different level-set surfaces. Here we plot the $0$, $5$, $10$ and $15$ level-set surfaces of the computed SDFs. The first row is for the chair object and the second row is for the table object.}
    \label{fig:isosufaces}
\end{figure}


Then we conduct an experiment on a volume with two cars to show that our method is able to handle multiple objects. When there are more than one object in the given volume, we may be interested in obtaining convex hulls for each object separately. If we use some conventional methods to do this, we may need some object detection algorithm to extract the region of each object first. However, in our algorithms, we can achieve this by selecting a small $\epsilon$ value. Recall that the convexity constraint we imposed is that $H(\phi)\geq 0$ in $L_\epsilon(|\phi|)$. The SDF of multiple convex hulls is convex only in a small neighbourhood around each object. As long as our $\epsilon$ is small enough, our algorithm will return the separated convex hulls of each object. More specifically, $\epsilon$ should be smaller than half of the distance between each pair of objects. The numerical results are shown in Figure \ref{fig:twocars_convexhulls}. We use the same set of parameters with the previous experiment here. From the result, we can see that when $\epsilon=5$, our exact algorithm can compute the separated convex hulls accurately. When we set $\epsilon=20$, we can get the big convex hull containing two cars together. We also plot the level-set surfaces to further illustrate how it works in Figure \ref{fig:twocars_levelset}. When $\epsilon=5$, we plot the 5 and 20 level-set surfaces. We can observe that the 5 level-set consists of two separated surfaces and both of them are convex. If we look at the 20 level-set surface, it is non-convex at somewhere between two cars. Since we only require $H(\phi)\geq 0$ in $L_5(|\phi|)$, the non-convexity on the 20 level-set does not violate the constraint. However, if we set $\epsilon=20$, this SDF is not feasible any more and what we will get is the SDF in (b) which corresponds to the convex hull of two cars combined.

\begin{figure}
    \centering
    \subfloat[original data]{\includegraphics[width=2.5cm]{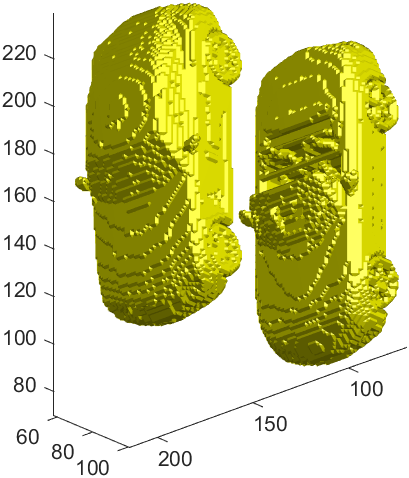}}\quad 
    \subfloat[$\epsilon=5$]{\includegraphics[width=2.5cm]{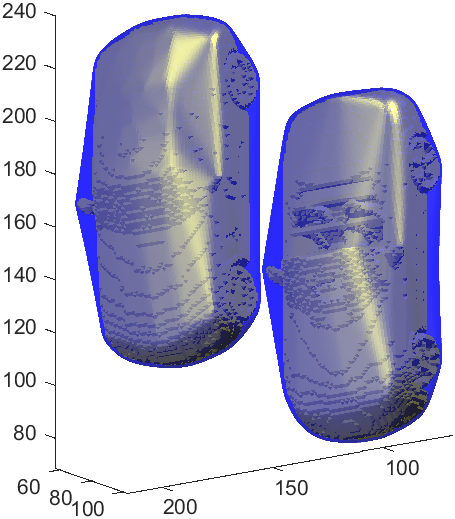}}\quad 
    \subfloat[$\epsilon=20$]{\includegraphics[width=2.5cm]{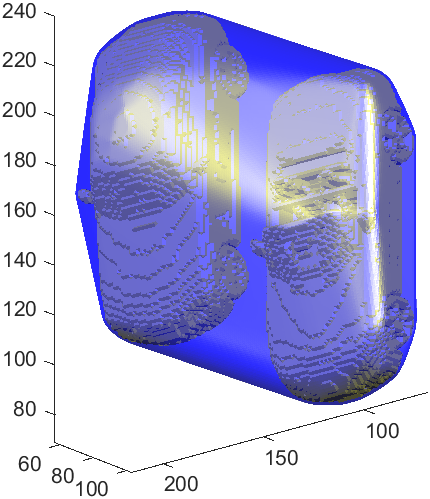}}\\
    \subfloat[original data]{\includegraphics[width=2.5cm]{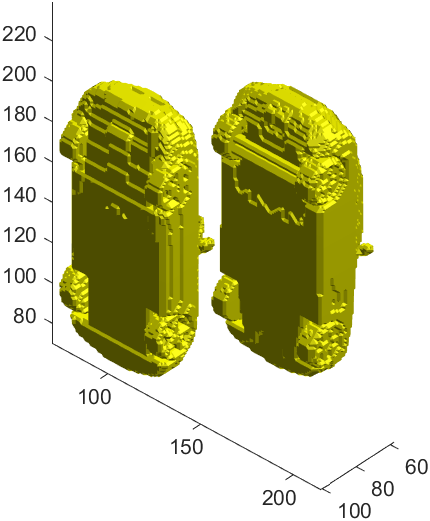}}\quad 
    \subfloat[$\epsilon=5$]{\includegraphics[width=2.5cm]{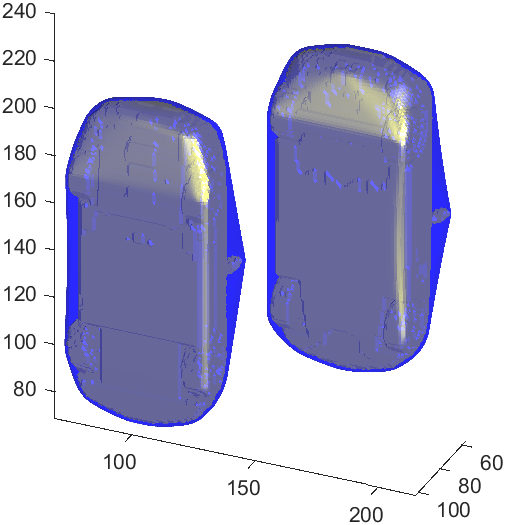}}\quad 
    \subfloat[$\epsilon=20$]{\includegraphics[width=2.5cm]{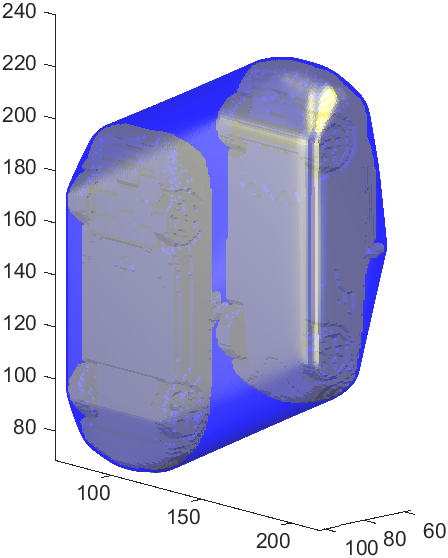}}\\
    \subfloat[original data]{\includegraphics[width=2.5cm]{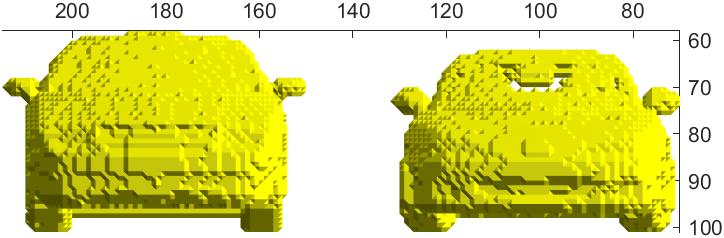}}\quad
    \subfloat[$\epsilon=5$]{\includegraphics[width=2.5cm]{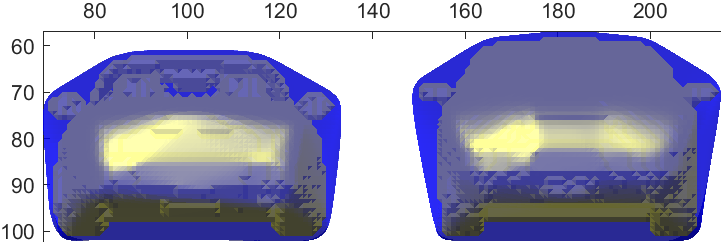}}\quad 
    \subfloat[$\epsilon=20$]{\includegraphics[width=2.5cm]{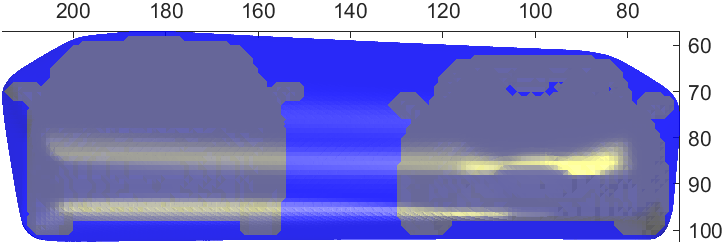}}
    \caption{Convex hulls of multiple objects. The first column shows the original data of two cars. The second column is the convex hulls when setting $\epsilon=5$ and the third column is the convex hull when setting $\epsilon=20$.}
    \label{fig:twocars_convexhulls}
\end{figure}

\begin{figure}
    \centering
    \subfloat[$\epsilon=5$]{\includegraphics[width=3cm]{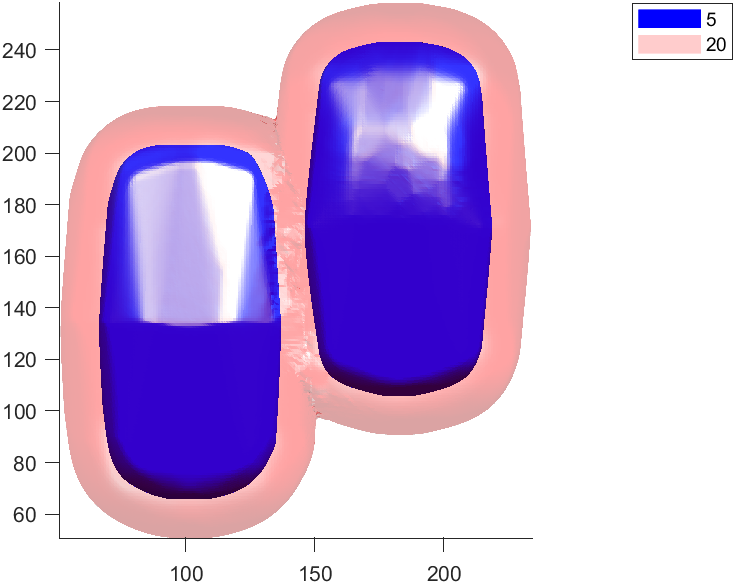}}\quad
    \subfloat[$\epsilon=20$]{\includegraphics[width=3cm]{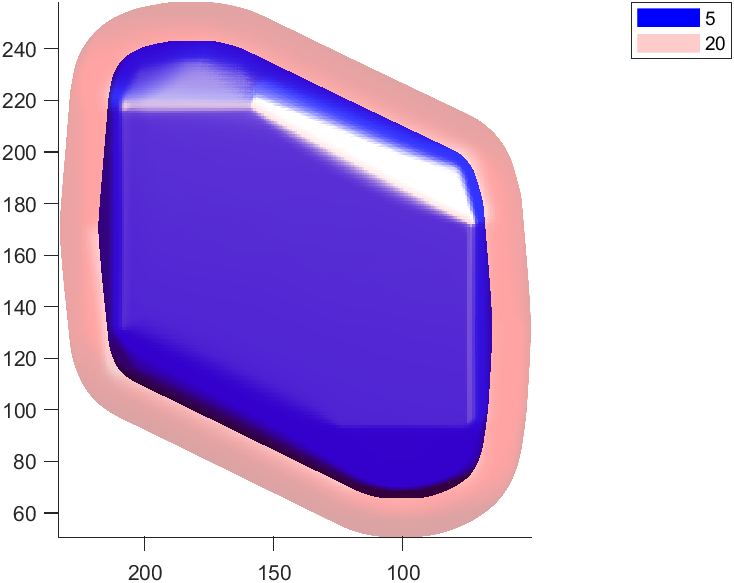}}
    \caption{Level-set surfaces of the corresponding SDF when $\epsilon=5$ and $\epsilon=20$.}
    \label{fig:twocars_levelset}
\end{figure}

As demonstrated in \cite{li2019convex}, the variational convex hull algorithm is very robust to noise and outliers. In 3-dimensional cases, the model can still preserve this advantage after applying our proposed prior. We choose two objects from the shape-net dataset \cite{shapenet2015} and generate some outliers, which are randomly sampled from a uniform distribution, to the volumetric data. The approximate results are shown in Figure \ref{fig:camera} and \ref{fig:headphone}. The parameters we used in this experiments is $\rho0=400$, $\rho_2=2000$, $\rho_1=2\sqrt{\rho_0\rho_2}$, $\lambda=800$ and $\epsilon=5$. 
\begin{figure}
    \centering
    \subfloat[original data]{\includegraphics[width=2cm]{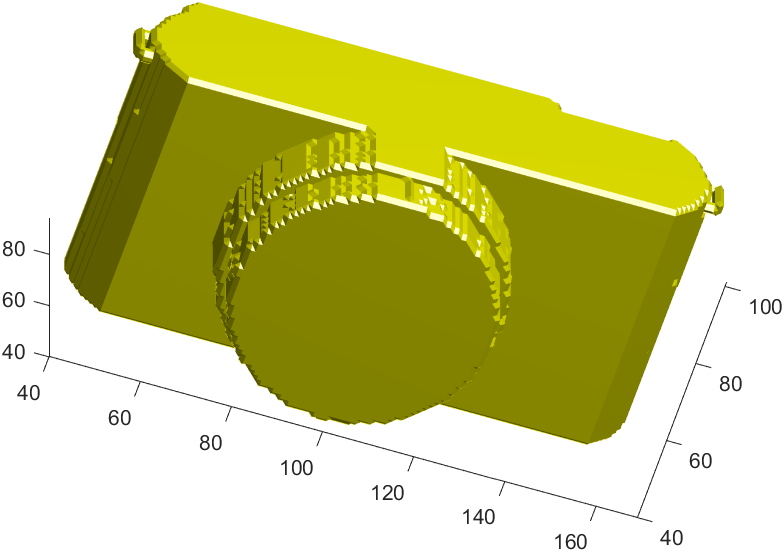}}\quad 
    \subfloat[original data]{\includegraphics[width=2cm]{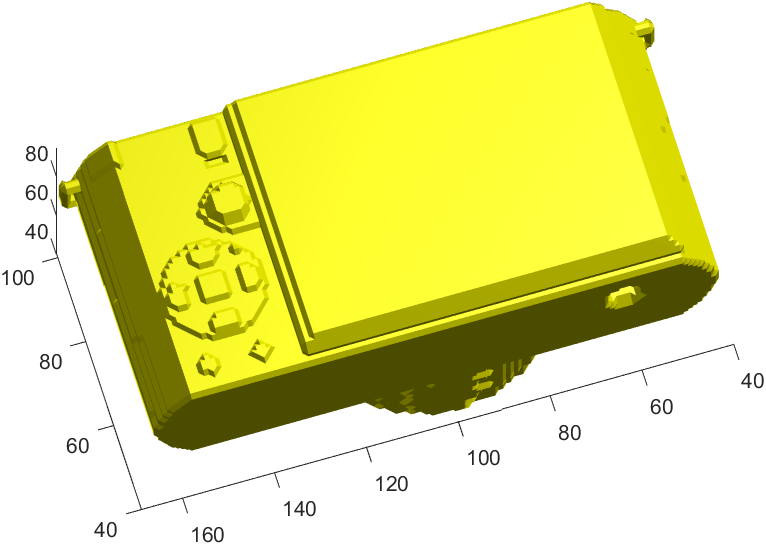}}\quad 
    \subfloat[generated outliers]{\includegraphics[width=2cm]{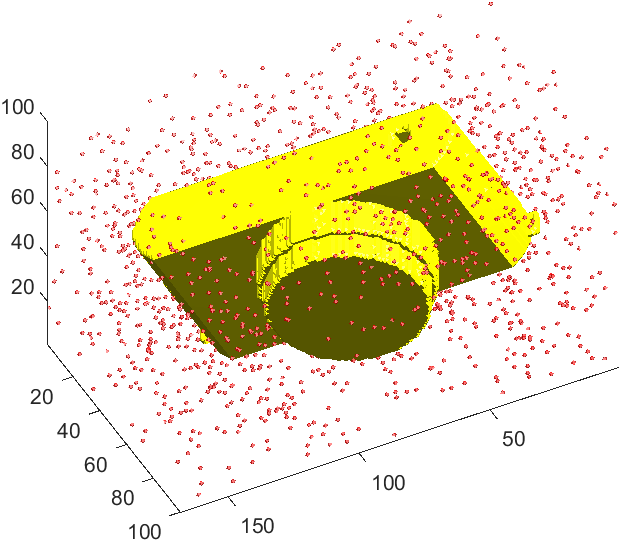}}\\
    \subfloat[convex hull]{\includegraphics[width=2.5cm]{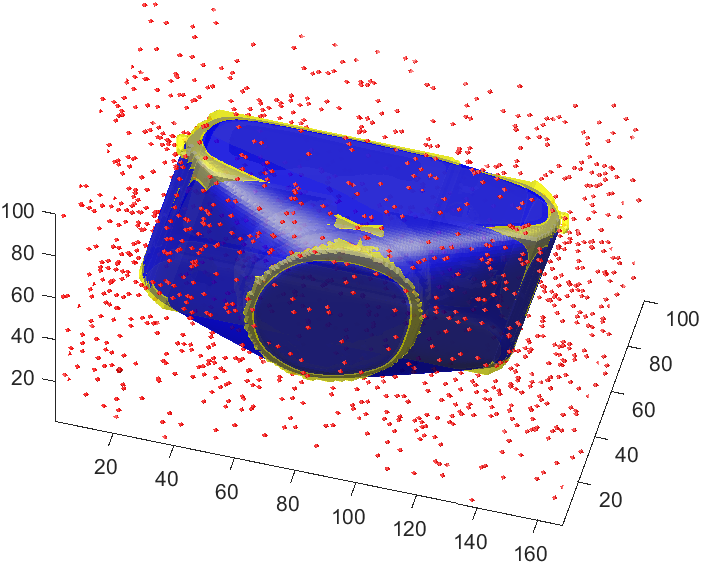}}\quad 
    \subfloat[convex hull]{\includegraphics[width=2.5cm]{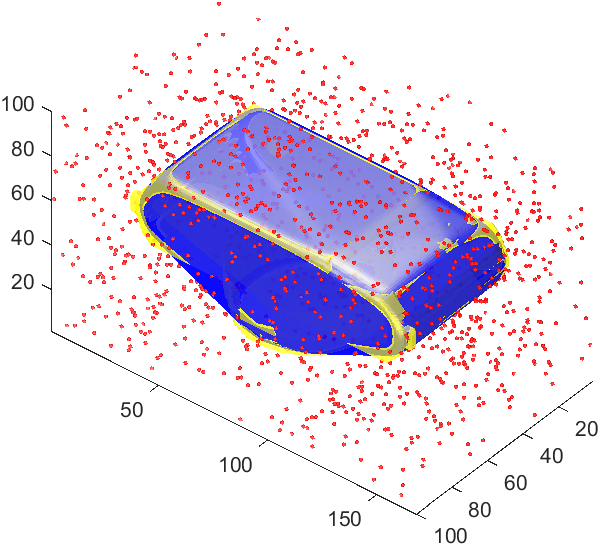}}\quad 
    \subfloat[convex hull]{\includegraphics[width=2.5cm]{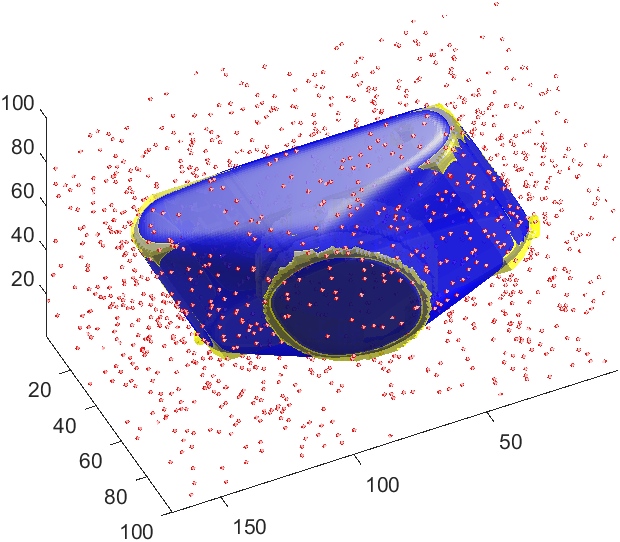}}\\
    \subfloat[convex hull]{\includegraphics[width=2.5cm]{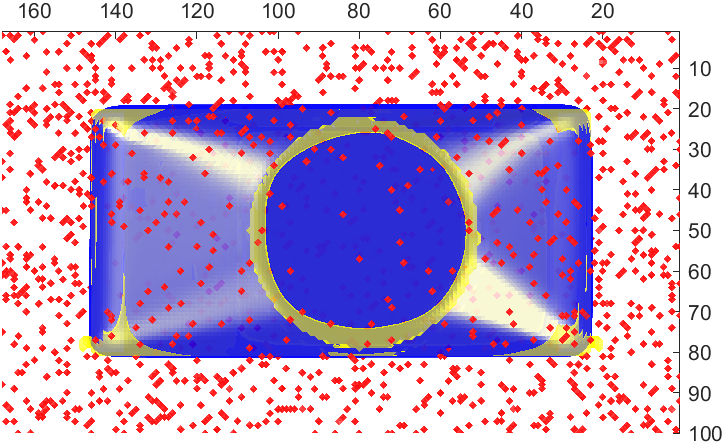}}\quad
    \subfloat[convex hull]{\includegraphics[width=2.5cm]{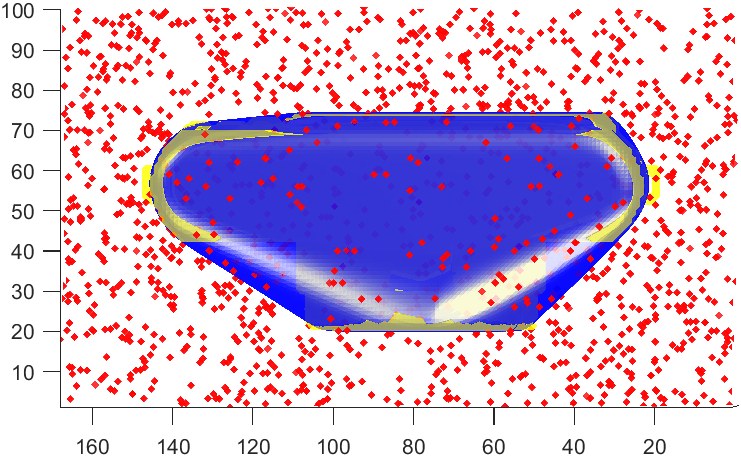}}\quad 
    \subfloat[convex hull]{\includegraphics[width=1.7cm]{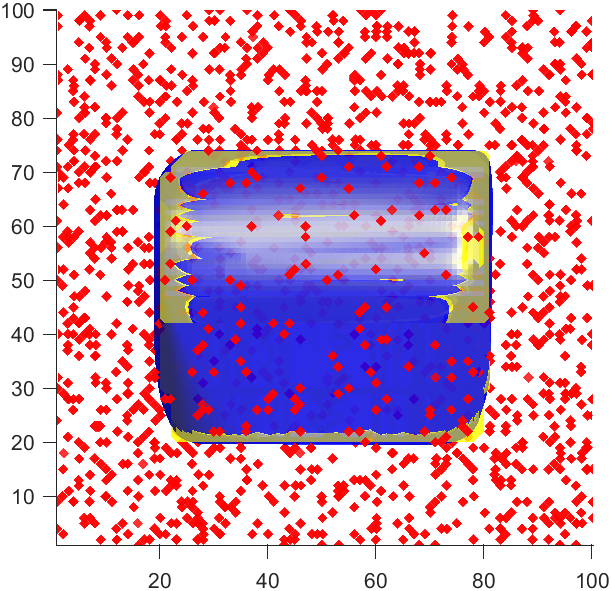}}
    \caption{Convex hull of a camera with some outliers.}
    \label{fig:camera}
\end{figure}


\begin{figure}
    \centering
    \subfloat[original data]{\includegraphics[width=2cm]{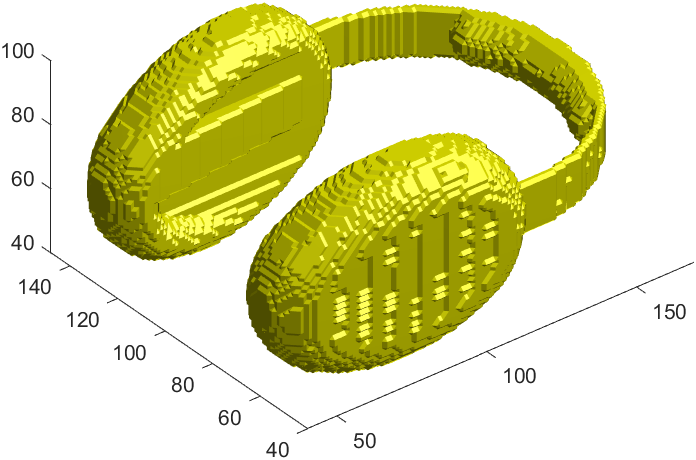}}\quad 
    \subfloat[original data]{\includegraphics[width=2cm]{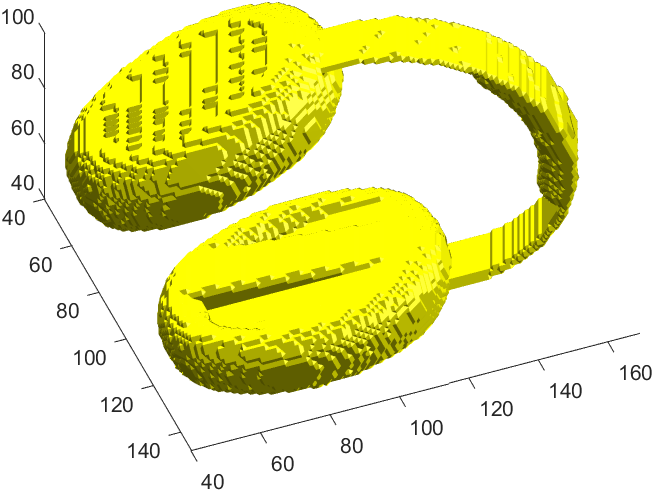}}\quad 
    \subfloat[generated outliers]{\includegraphics[width=2cm]{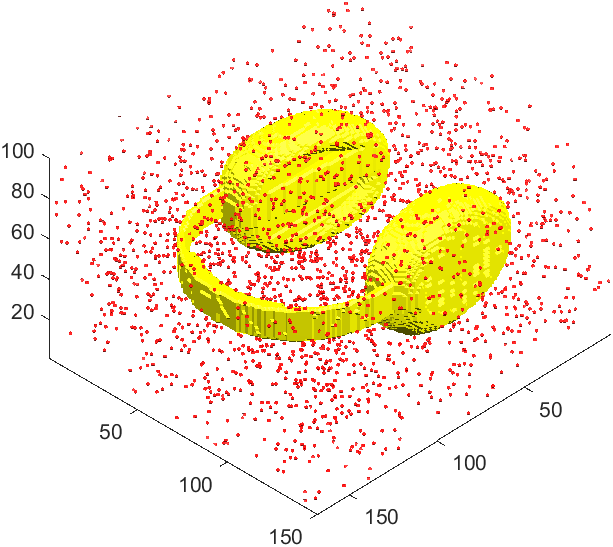}}\\
    \subfloat[convex hull]{\includegraphics[width=2.5cm]{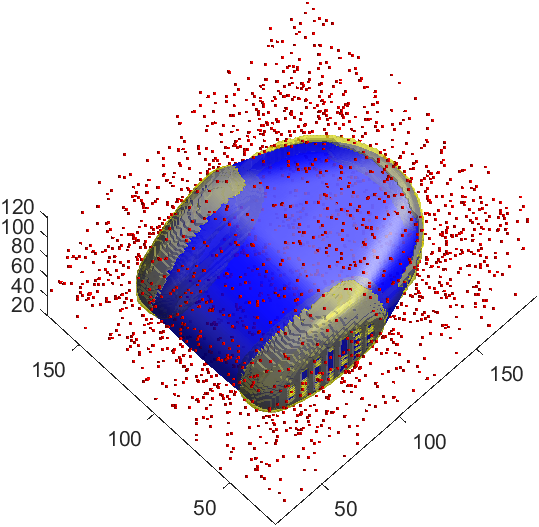}}\quad 
    \subfloat[convex hull]{\includegraphics[width=2.5cm]{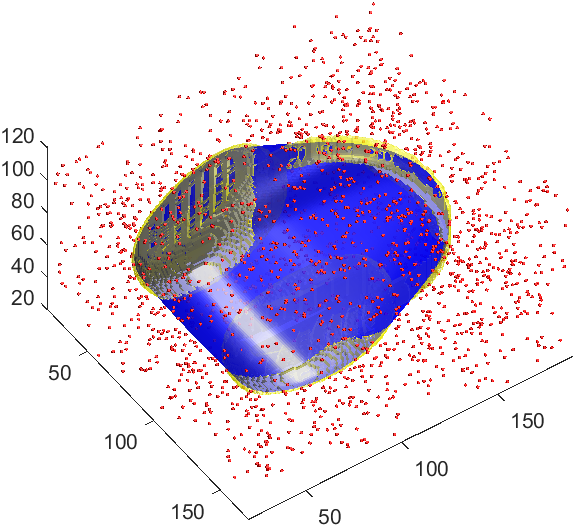}}\quad 
    \subfloat[convex hull]{\includegraphics[width=2.5cm]{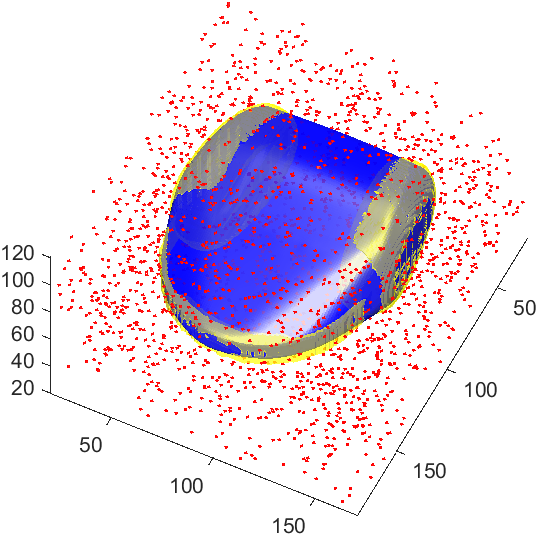}}\\
    \subfloat[convex hull]{\includegraphics[width=2.5cm]{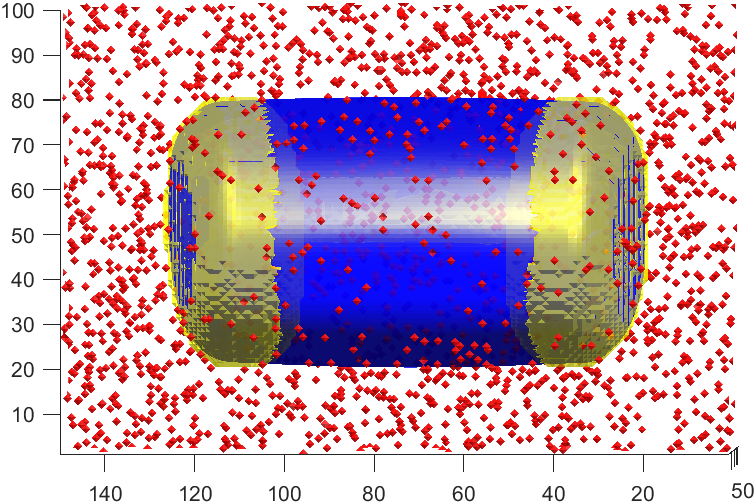}}\quad
    \subfloat[convex hull]{\includegraphics[width=2.5cm]{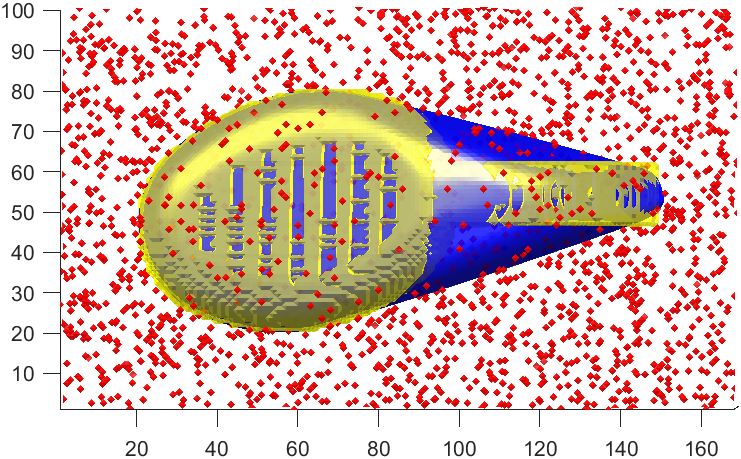}}\quad 
    \subfloat[convex hull]{\includegraphics[width=1.7cm]{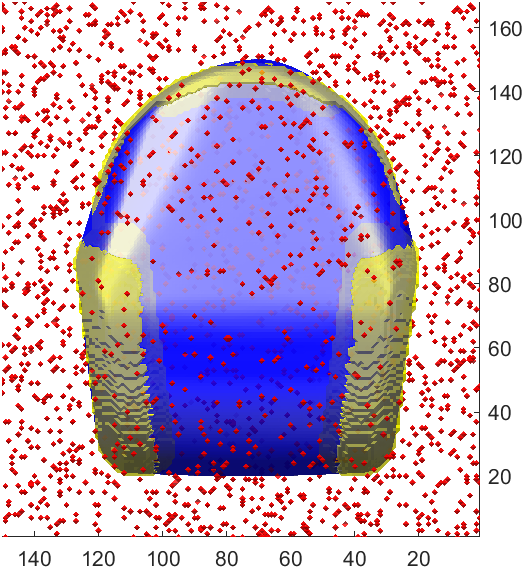}}
    \caption{Convex hull of a headphone with some outliers.}
    \label{fig:headphone}
\end{figure}

We can observe that even if there exist large amount of outliers in the input, our algorithm can still obtain a very good approximation to the original convex hull. We also compare our result with the convex layer methods \cite{chazelle1985convex}, which is also called the onion peeling method. Given a finite set of points $X$, the convex hull of $X$ is called the first convex layer. Then, we remove those points on the boundary of Conv$(X)$ and compute the convex hull for the rest points. The new convex hull is called the second convex layer. By continuing the same procedure, we can get a set of convex layers and each point in $X$ must belong to one layer. The convex layer method for outliers detection usually relies on two assumptions \cite{harsh2018onion}. Firstly, the outliers are located in the first few convex layers. In other words, the outliers are evenly distributed around the object. Secondly, the approximate number of outliers is known, which is not easy to obtain in some situations. We briefly described a convex layer algorithm for convex hull approximation in Algorithm \ref{algo:convex_layers}. For the camera and the headphone objects, the estimated number  of outliers $K$ are set to be $1400$ and $2300$, and the exact number of outliers are $1371$ and $2268$ respectively. From the results of convex layers method in Figure \ref{fig:convex_layers}, we see that even a very accurate $k$ is given, the approximate convex hulls still deviate a lot from the true solution. Looking at the error in Table \ref{table:error2}, our method beats the convex layer method by a huge margin.

 \begin{figure}
    \centering
    \subfloat[$camera$]{\includegraphics[width=4cm]{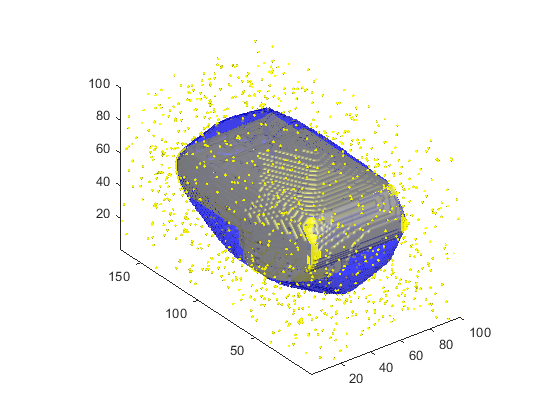}}\quad
    \subfloat[$headphone$]{\includegraphics[width=4cm]{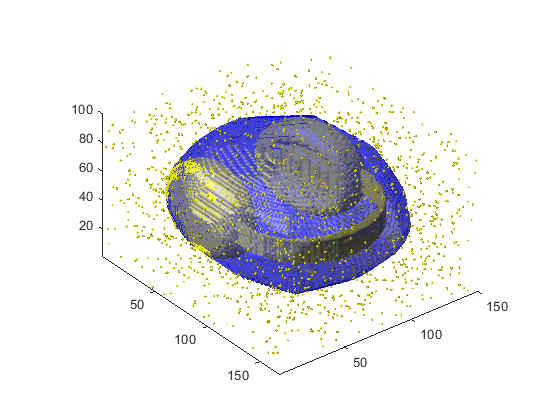}}
    \caption{Approximate convex hulls by the convex layer method.}
    \label{fig:convex_layers}
\end{figure}

\begin{algorithm}[ht]
\caption{convex layer method for convex hull approximation}
\begin{algorithmic}[1]
    \State Input a set of points $X$ and the estimated number of outliers $k$, where $k$ should be smaller than $|X|$.
    \State Set $count=0$.
    \While{$count<k$}
    \State Compute the convex hull of $X$.
    \State Set $C_X$ be the set of points lying on the boundary of Conv$(X)$.
    \State Delete points in $C_X$ from $X$.
    \State $count=count+|C_X|$.
    \EndWhile
    \State Return Conv$(X)$ as the approximated convex hull.
\end{algorithmic}\label{algo:convex_layers}
\end{algorithm}

\begin{table}[ht]
\caption{The relative errors of our method and the convex layers method}
\centering
\begin{tabular}{@{}ccc@{}}
\toprule
method  & \textbf{camera}  & \textbf{headphone} \\ \midrule
our method & 8.45\%  & 9.80\% \\ 
convex layers method  & 33.29\% & 47.83\%   \\ \bottomrule
\end{tabular}
\label{table:error2}
\end{table}

\section{Conclusions and future works}\label{sec:cln}
In this work, we presented a novel level-set based method for convexity shape representation in any dimensions. The method uses the second order condition of convex functions to characterize the convexity. We also proved the equivalence between the convexity of object and the convexity of associated SDF. This new method is very simple and easy to implementation in real applications. Two applications with convexity priors in computer vision were discussed and an efficient algorithm for a general optimization problem with the proposed convexity prior constraint was presented. Experiments for 2 and 3 dimensional examples were conducted and presented to show the effectiveness of the proposed method. In the future research, we will devote our effort to exploring more representation methods of generic shape priors and more potential applications of our convexity prior, especially in high dimensional spaces.

\bibliographystyle{siamplain}
\bibliography{reference}

\end{document}